\documentclass[twoside]{article} 
\usepackage{graphicx,amsmath, amsfonts,amsthm} % more modern
\usepackage[authoryear]{natbib}

\newtheorem{definition}{Definition}
\newtheorem{theorem}{Theorem} 
\newtheorem{prop}{Proposition} 
\newtheorem{lemma}{Lemma}

\newtheorem{corollary}{Corollary}
\newtheorem{remark}{Remark}
\newtheorem{fact}{Fact}

\newcommand{\ZZ}{\mathcal{Z}}
\newcommand{\PP}{p}
\newcommand{\LL}{\mathcal{L}}
\newcommand{\HH}{\mathcal{H}}
\newcommand{\EE}{\mathbb{E}}

\newcommand{\Sim}[2]{\langle #1, \;#2\rangle}
\newcommand{\Dis}[2]{|| #1\,,\;\,#2 ||_\mathcal{T}}
\newcommand{\info}[2]{\mathcal{J}(#1;\,#2)}
\newcommand{\infoc}[3]{\mathcal{J}(#1;\,#2\,|\,#3)}

% If your paper is accepted, change the options for the package
% aistats2017 as follows:
%
\usepackage{aistats2017}
%\usepackage[accepted]{aistats2017}
%
% This option will print headings for the title of your paper and
% headings for the authors names, plus a copyright note at the end of
% the first column of the first page.

\begin{document}

% If your paper is accepted and the title of your paper is very long,
% the style will print as headings an error message. Use the following
% command to supply a shorter title of your paper so that it can be
% used as headings.
%
%\runningtitle{I use this title instead because the last one was very long}

% If your paper is accepted and the number of authors is large, the
% style will print as headings an error message. Use the following
% command to supply a shorter version of the authors names so that
% they can be used as headings (for example, use only the surnames)
%
%\runningauthor{Surname 1, Surname 2, Surname 3, ...., Surname n}

\twocolumn[

\aistatstitle{Uniform Generalization, Concentration, and Adaptive Learning}

\aistatsauthor{
Ibrahim Alabdulmohsin
}

\aistatsaddress{
CEMSE Division, KAUST
            Thuwal 23955, Saudi Arabia 
} ]

\begin{abstract}
One fundamental goal in any learning algorithm is to mitigate its risk for overfitting. Mathematically, this requires that the learning algorithm enjoys a small generalization risk, which is defined either in expectation or in probability. Both types of generalization are commonly used in the literature. For instance, generalization in expectation has been used to analyze algorithms, such as ridge regression and SGD, whereas generalization in probability is used in the VC theory, among others. Recently, a third notion of generalization has been studied, called \emph{uniform generalization}, which requires that the generalization risk vanishes uniformly in expectation across all bounded parametric losses. It has been shown that uniform generalization is, in fact, \emph{equivalent} to an information-theoretic stability constraint, and that it recovers classical results in learning theory. It is achievable under various settings, such as sample compression schemes, finite hypothesis spaces, finite domains, and differential privacy. However, the relationship between uniform generalization and concentration remained unknown. In this paper, we answer this question by proving that, while a generalization in expectation does not imply a generalization in probability, a \emph{uniform} generalization in expectation does imply concentration. We establish a chain rule for the uniform generalization risk of the composition of hypotheses and use it to derive a large deviation bound. Finally, we prove that the bound is tight.\end{abstract}
\section{INTRODUCTION}
One of the central questions in statistical learning theory is to establish the conditions for generalization from a finite collection of observations to the future. Mathematically, this is formalized by bounding the difference between the empirical and the true risks of a given learning algorithm $\LL:\,\cup_{m=1}^\infty\,\ZZ^m\to\HH$, where $\ZZ$ is the observation space and $\HH$ is the hypothesis space. 

Informally, suppose we have a learning algorithm $\LL$ that receives a sample $S_m=\{Z_1,\ldots, Z_m\}$, which comprises of $m$ i.i.d. observations $Z_i\sim\PP(z)$, and uses $S_m$ to  select a hypothesis $H\in\HH$. Because $H$ is selected \emph{based on} the sample $S_m$, its \emph{empirical} risk on $S_m$ is a biased estimator of its \emph{true} risk with respect to the distribution of observations $\PP(z)$. The difference between the two risks, referred to as the \emph{generalization} risk, determines the prospect of over-fitting in the learning algorithm. 

In the literature, generalization bounds are often expressed either in expectation or in probability. Let $L(\cdot; H): \ZZ\to[0,1]$ be some parametric loss function that satisfies the Markov chain $S_m\to H\to L(\cdot; H)$. We write $R_{true}(H)$ and $R_{emp}(H; S_m)$ to denote, respectively, the true and the empirical risks of the hypothesis $H$ w.r.t.  $L(\cdot; H)$:
\begin{align}
\nonumber R_{emp}(H; S_m) &= \frac{1}{m}\,\sum_{Z_i\in S_m}\, L(Z_i;\,H)\\
\label{r_emp_true_eq} R_{true}(H) &= \EE_{Z\sim\PP(z)}\,\big[L(Z;\,H)\big]
\end{align}
Then, generalization in expectation and generalization in probability are defined as follows:
\begin{definition}[Generalization in Expectation]\label{def::gen_expectation}
The expected generalization risk of a learning algorithm $\LL:\,\cup_{m=1}^\infty\,\ZZ^m\to\HH$ with respect to a parametric loss $L(\cdot; H):\ZZ\to[0,1]$ is defined by:
\begin{equation}\label{Rgen_expectation}
R_{gen}(\LL) = \EE_{S_m, H|S_m}\big[R_{emp}(H; S_m) - R_{true}(H)\big],
\end{equation}
where the empirical risk $R_{emp}(H; S_m)$ and the true risk $R_{true}(H)$ are given by Eq. (\ref{r_emp_true_eq}). A learning algorithm $\LL$ generalizes in expectation if $R_{gen}(\LL)\to 0$ as $m\to\infty$ for all distributions $\PP(z)$. 
\end{definition}

\begin{definition}[Generalization in Probability]
A learning algorithm $\LL$ generalizes in probability if for any positive constant $\epsilon>0$, we have: 
\begin{equation*}
\PP\Big\{\big|R_{true}(H) - R_{emp}(H; S_m)\big|>\epsilon\Big\} \to 0 \;\;\text{ as } m\to\infty,
\end{equation*}
where the probability is evaluated over the random choice of $S_m$ and the internal randomness of $\LL$. 
\end{definition}
Clearly, for bounded loss functions, a generalization in probability implies a generalization in expectation but the converse is not generally true.

In general, both types of generalization have been studied in the literature. For instance, generalization in probability is used in the Vapnik-Chervonenkis (VC) theory, the covering numbers, and the PAC-Bayesian framework, among others \citep{vapnik1999overview,blumer1989learnability,mcallester2003pac,bousquet2002stability,bartlett2002rademacher,stat_learn_theory_2004,audibert2007combining}. Generalization in expectation, on the other hand, was used to analyze learning algorithms, such as the stochastic gradient descent (SGD), differential privacy, and ridge regression \citep{sgd_train_fast_2015,adaptive_learning_2015,SSS2014understandML}. Its common tool is a replace-one averaging lemma, similar to the Luntz-Brailovsky theorem \citep{luntz1969estimation,vapnik2000bounds}, which relates generalization to algorithmic stability \citep{sgd_train_fast_2015,SSS2014understandML}. Generalization in expectation is often simpler to analyze, but it provides a weaker performance guarantee. 

Recently, however, a third notion of generalization has been introduced in \cite{alabdulmohsin_nips_2015}, which is called \emph{uniform} generalization. It also expresses generalization bounds in expectation, but it is stronger than the notion of generalization in Definition \ref{def::gen_expectation} because it requires that the generalization risk vanishes uniformly across all bounded parametric loss functions. Importantly, uniform generalization is shown to be \emph{equivalent} to an information-theoretic algorithmic stability constraint, and that it recovers classical results in learning theory. It has been connected to the VC dimension as well \citep{alabdulmohsin_nips_2015}. Moreover, many conditions can be shown to be sufficient for uniform generalization. These include {differential privacy}, {sample compression schemes}, {perfect generalization}, {robust generalization}, {typical generalization}, {finite description lengths}, or {finite domains}. Indeed, we  prove in Appendix \ref{appendix_uniform_gen_vs_other_notions} that all such conditions are sufficient for uniform generalization.

Unfortunately, uniform generalization bounds hold only in expectation without any concentration guarantees. This sheds doubt on the utility of the notion of uniform generalization and its information-theoretic approach of analyzing learning algorithms. For instance, we will later construct a learning algorithm that generalizes \emph{perfectly} in expectation w.r.t. a \emph{specific} parametric loss even though it does not generalize almost surely over the random draw of the sample $S_m$. Hence, generalization in expectation is insufficient to ensure that a generalization will take place in practice. 

Nevertheless, we will establish in this paper that a \emph{uniform} generalization in expectation is, in fact, sufficient for a generalization in probability to hold. Moreover, we will derive a tight concentration bound. Hence, all of the uniform generalization bounds, such as the ones derived in \citep{alabdulmohsin_nips_2015}, hold, not only in expectation but with a high probability as well. Besides,  our result provides, as far as we know, the first strong connection between the two forms of generalization in the literature. We present examples of how our concentration bound can be used to deduce new concentration results for important classes of learning algorithms, such as differential privacy. 

The proof of our concentration bound rests on a chain rule that we derive for uniform generalization, which is analogous to the chain rule of mutual information in information theory \citep{cover2012elements}. Using the chain rule, we show that learning algorithms that generalize uniformly in expectation are amenable to non-adaptive composition, which is analogous to earlier results using differential privacy, sample compression schemes, and perfect generalization \citep{dwork2013algorithmic,robust_gen_2016}. Moreover, the chain rule lends support to the \emph{information budget} framework, which was recently proposed for controlling the bias of estimators in the adaptive setting using information theory \citep{control_bias_it_2016}. 
%One important class of learning algorithms that are well-studied in the literature are \emph{differentially-private} learning algorithms \cite{dwork2013algorithmic}. We will show that our concentration bound for uniform generalization can be used to provide a new concentration bound for this important class of learning algorithms. Differential privacy is a popular technique for anonymizing records such that, approximately speaking, no information can be leaked from the hypothesis $H$ besides what can be obtained without knowing the hypothesis. It relies on randomization to achieve this goal \cite{dwork2013algorithmic,diff_privacy_survey_2014}. 

The rest of the paper is structured as follows. We will, first, briefly outline the terminology and notation used in this paper and review the existing literature. Next, we recount the main results pertaining to uniform generalization and algorithmic stability and describe how uniform generalization differs from uniform convergence. Finally, we derive the concentration bound for uniform generalization, prove its tightness, and discuss some of its implications afterward. 

\section{Terminology and Notation} 
Throughout this paper, we will always write $\ZZ$ to denote the space of observations (a.k.a. \emph{domain}) and write $\HH$ to denote the hypothesis space (a.k.a. \emph{range}). A learning algorithm $\LL:\,\cup_{m=1}^\infty\,\ZZ^m\to\HH$ is formally treated as a stochastic map, where the hypothesis $H\in\HH$ can be a deterministic or a randomized function of the sample $S_m\in\ZZ^m$.

We consider the general setting of learning introduced by Vapnik \citep{vapnik1999overview}. In this setting, the observations $Z_i\in\ZZ$ can be instance-label pairs $Z_i=(X_i, \,Y_i)$ as in supervised learning or they can comprise of instances only as in unsupervised learning. The distinction between the two learning paradigms is irrelevant. Moreover, we allow the hypothesis $H$ to be an arbitrary random variable. For instance, $H$ can be a classifier, a regression function, a statistical query, a set of centroids, a density estimate, or an enclosing sphere. Only the relationship between the two random variables $S_m$ and $H$ matters in our analysis. 

Moreover, if $Z\sim\PP(z)$ is a random variable drawn from $\mathcal{Z}$ and $f(Z)$ is a function of $Z$, we write $\EE_{Z\sim\PP(Z)}\,f(Z)$ to denote the expected value of $f(Z)$ with respect to the distribution $\PP(z)$. Occasionally, we omit $\PP(z)$ from the subscript if it is clear from the context. If $Z$ takes its values from a finite set $S$ uniformly at random, we write ${Z\sim S}$ to denote this fact. If $X$ is a boolean random variable, then $\mathbb{I}\{X\}=1$ if and only if $X$ is {true}, otherwise $\mathbb{I}\{X\}=0$.%  In general, random variables are denoted with capital letters, instances of random variables are denoted with small letters, and alphabets are denoted with the calligraphic typeface. 

Finally, given two probability measures $P$ and $Q$ defined on the same space, we will write $\Sim{P}{Q}$ to denote the \emph{overlapping coefficient} between $P$ and $Q$. That is, $\Sim{P}{Q} = 1- \Dis{P}{Q}$, where $\Dis{P}{Q}= \frac{1}{2}\big|\big|P-Q\big|\big|_1$ is the total variation distance.

\section{Related Work}
Generalization can be rightfully considered as an extension to the \emph{law of large numbers}, which is one of the earliest and most important results in probability theory and statistics. Suppose we have  $m$ i.i.d. observations $S_m=\{Z_1,\ldots, Z_m\}\in\ZZ^m$ and let $f:\mathcal{Z}\to [0,1]$ be an arbitrary function. If $f$ is fixed \emph{independently} of $S_m$, then $\EE_{Z_i\sim S_m} [f(Z_i)] \to \EE_{Z\sim\PP(z)} [f(Z)]$ a.s. as $m\to\infty$. This law is generally attributed to Jacob Bernoulli, who wrote an extensive treatise on the subject published posthumously in 1713 \citep{stigler1986history}. Modern proofs include low-confidence guarantees, e.g. the Chebychev inequality, and high confidence bounds, e.g. the Chernoff method \citep{cocentrationboucheron}.% of bounding the moment generating function \cite{cocentrationboucheron}. 

When the function $f$ depends on $S_m$, the law of large numbers is no longer valid because $f(Z_i)$ are not independent random variables. One remedy is to look into the function $F(S_m) = \EE_{Z_i\sim S_m} f(Z_i)$. For instance, the Efron-Stein-Steele lemma might be used to bound the variance of $F$, which, in turn, can be translated into a concentration bound using the Chebychev inequality \citep{cocentrationboucheron,bousquet2002stability}. Alternatively, if $F$ satisfies the \emph{bounded-difference} property, then McDiarmid's inequality yields a high-confidence guarantee \citep{cocentrationboucheron,bousquet2002stability}. 

In this paper, the same question is being addressed. However, we address it in an information-theoretic manner. We will show that if the function $f:\ZZ\to[0,1]$ (as a random variable instantiated after observing the sample $S_m$) carries little \emph{information} about any individual observation $Z_i\in S_m$, then the difference between $\EE_{Z_i\sim S_m}[f(Z_i)]$ and $\EE_{Z\sim\PP(z)}[f(Z)]$ will be small with a high probability. The measure of information used here is given by the notion of \emph{variational information} $\info{X}{Y}=1-S(X; Y)$ between the random variables $X$ and $Y$, where $S(X; Y)$ is the mutual stability introduced in \cite{alabdulmohsin_nips_2015}. Variational information is an instance of the class of \emph{informativity} measures using $f$-divergences, for which an axiomatic basis has been proposed \citep{f_divergence_csiszar,f_divergence_csiszar_2008}. 

The information-theoretic approaches of analyzing the generalization risks of learning algorithms, such as the one pursued in this paper, have found applications in adaptive data analysis. This includes the method of \cite{adaptive_learning_2015} using the notion of \emph{max-information} and the method of \cite{control_bias_it_2016} using the \emph{mutual information}. For bounded losses, uniform generalization bounds using the \emph{variational information} yield tighter results, as deduced by the Pinsker inequality \citep{pinsker_colt_2009}. In this paper, we prove that these bounds hold not only in expectation but with a high probability as well. 

As a consequence of our main theorem, concentration bounds for a given learning algorithm can be immediately deduced once we recognize that the algorithm generalizes uniformly in expectation. Examples of when this holds include having (1) a finite average description length of the hypothesis, (2) a finite VC dimension of the \emph{induced concept class}, (3) differential privacy, (4) robust generalization, (5) typical generalization, (6) bounded mutual information, and (7) finite domains. We briefly describe these settings that have been previously studied in the literature and prove their connections to uniform generalization in Appendix \ref{appendix_uniform_gen_vs_other_notions}. We also present connections between uniform generalization and learnability in Appendix \ref{appendix_learnability}. A second consequence of our work is establishing the equivalence between the notion of uniform generalization studied by \cite{alabdulmohsin_nips_2015} and the notion of \emph{robust generalization} considered more recently by \cite{robust_gen_2016}.

Besides deriving a concentration bound, we also establish that our bound is tight. This tightness result is inspired by the work of \cite{bassily2015adaptive} (Lemma 7.4) and \cite{shalev2010learnability} (Example 3), where similar results are established for differential privacy and learnability respectively. In Section \ref{sect::tightness}, we combine techniques from both works to show that our concentration bound is indeed tight. 

\section{Uniform Generalization}
First, we review the main results pertaining to uniform generalization and algorithmic stability. We only mention the key results here for completeness. The reader is referred to \cite{alabdulmohsin_nips_2015} for details.

\subsection{Uniform Generalization vs. Uniform Convergence}
The main result of \cite{alabdulmohsin_nips_2015} is the \emph{equivalence} between algorithmic stability and \emph{uniform} generalization in expectation across all bounded parametric loss functions that satisfy the Markov chain: $S_m\to H\to L(\cdot; H)$. 
\begin{definition}[Uniform Generalization]\label{def::uniform_generalization}
A learning algorithm $\LL:\,\cup_{m=1}^\infty\,\ZZ^m\to\HH$ generalizes {uniformly} if for any $\epsilon>0$,  $\exists m_0(\epsilon)>0$ such that for all distributions $\PP(z)$ on $\ZZ$, all parametric losses, and all sample sizes $m> m_0(\epsilon)$, we have $|R_{gen}(\LL)\big|\le \epsilon$, where $R_{gen}(\LL)$ is given in Eq. (\ref{Rgen_expectation}).
\end{definition}
\begin{definition}\label{def::uniform_generalization_rate}
A learning algorithm $\mathcal{L}$ generalizes uniformly at the rate $\epsilon>0$ if the expected generalization risk satisfies $|R_{gen}(\LL)|\le \epsilon$ for all distributions $\PP(z)$ on $\ZZ$ and all parametric losses. 
\end{definition}
With some abuse of terminology, we will occasionally say that a learning algorithm generalizes uniformly when it generalizes uniformly according to Definition \ref{def::uniform_generalization_rate} for some provably small $\epsilon$. Whether we are referring to Definition \ref{def::uniform_generalization} or \ref{def::uniform_generalization_rate} will be clear from the context. 

Uniform generalization is different from the classical notion of uniform convergence. To see the difference, we note that a parametric loss $L(Z;H):\ZZ\times\HH\to [0,1]$ is a function of the two random variables $Z\in\ZZ$ and $H\in\HH$. This parametric loss on the product space $\ZZ\times\HH$, sometimes called the \emph{loss class} \citep{stat_learn_theory_2004}, is a family of loss functions on $Z$ indexed by $H$. Uniform convergence, such as by using the union bound or the growth function, establishes sufficient conditions for uniform convergence to hold within the family of loss functions indexed by $H$ for \emph{a single} parametric loss. These uniform convergence guarantees are often independent of how $\LL$ works. 

By contrast, suppose that the learning algorithm $\LL$ produces a hypothesis $H$ given a sample $S_m\in\mathcal{Z}^m$ with probability $\PP_\LL(H|S_m)$, where $\PP_\LL(H|S_m)$ can be degenerate in deterministic algorithms. Then, in principle, one can compute the expected generalization risk $R_{gen}(\LL)$, defined in Eq. (\ref{Rgen_expectation}), for every possible parametric loss. This is the \emph{average} loss within each possible family of bounded loss functions indexed by $H$, averaged over the random choice of $S_m$ and the internal randomness of $\LL$. Uniform generalization establishes the conditions for $|R_{gen}(\LL)|$ to go to zero uniformly across all parametric loss functions. It is heavily dependent on how $\LL$ works.  

\subsection{Previous Results}
The main result proved in \cite{alabdulmohsin_nips_2015} is that uniform generalization is equivalent to an information-theoretic stability constraint on $\LL$. 

\begin{definition}[Mutual Stability]\label{mutual_stab_def}
The mutual stability between two random variables $X$ and $Y$ is $S(X;Y) \doteq\Sim{\PP(X)\PP(Y)}{\PP(X,Y)}$, where $\Sim{P}{Q} = 1- \Dis{P}{Q}$, and $\Dis{P}{Q}$ is the total variation distance. 
\end{definition}
\begin{definition}[Variational Information]
The variational information $\info{X}{Y}$ between the random variables $X$ and $Y$ is defined by $\info{X}{Y} = 1-S(X; Y)$.
\end{definition}
Informally speaking, $\info{X}{Y}$ measures the influence of observing the value of $X$ on the distribution of $Y$. The rationale behind this definition is revealed next. 
\begin{definition}[Algorithmic Stability]\label{algorithmic_stability_def}
Let  $\LL$ be a learning algorithm that uses $S_m=\{Z_i\}_{i=1,..,m}\sim\PP^m(z)$ to produce a hypothesis $H\in\mathcal{H}$. Let $Z_{trn}\sim S_m$ be a random variable whose value is drawn uniformly at random from the sample $S_m$. Then, the algorithmic stability of $\LL$ is defined by: $\mathbb{S}(\LL) = \inf_{\PP(z)}\; S(H;\,Z_{trn})$, where the infimum is taken over all possible distributions of observations $\PP(z)$. A learning algorithm is called stable if $\lim_{m\to\infty}\;\mathbb{S}(\LL) = 1$. 
 \end{definition}
Intuitively, a learning algorithm is stable if the influence of a \emph{single} training example vanishes as $m\to\infty$. 
\begin{theorem}[Alabdulmohsin, 2015]\label{main_theorem}
For any learning algorithm $\LL:\,\cup_{m=1}^\infty\,\ZZ^m\to\HH$, algorithmic stability (Definition \ref{algorithmic_stability_def}) is both necessary and sufficient for uniform generalization (Definition \ref{def::uniform_generalization}). In addition, $\big|R_{gen}(\LL)\big| \le \info{H}{Z_{trn}} \le 1-\mathbb{S}(\LL)$, with $R_{gen}(\LL)$  defined in Eq. (\ref{Rgen_expectation}). 
\end{theorem}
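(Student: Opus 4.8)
The plan is to reduce everything — the inequality and the equivalence — to a single observation: the expected generalization risk is, up to sign, the number a bounded loss assigns to distinguishing the joint law of $(Z_{trn},H)$ from the product of its marginals, and the worst such loss realizes exactly the total variation distance between these two laws.

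First I would rewrite $R_{gen}(\LL)$ in terms of the pair $(Z_{trn},H)$. Drawing an index $J$ uniformly from $\{1,\ldots,m\}$ and setting $Z_{trn}=Z_J$, the i.i.d. assumption makes the marginal of $Z_{trn}$ equal to $\PP(z)$ while the joint of $(Z_{trn},H)$ is exactly $\PP(Z_{trn},H)$; hence $\EE_{S_m,H|S_m}[R_{emp}(H;S_m)] = \EE_{\PP(Z_{trn},H)}[L(Z_{trn};H)]$. On the other side, a fresh test point is independent of $H$, and $H$ carries its own marginal $\PP(H)$, so $\EE_{S_m,H|S_m}[R_{true}(H)] = \EE_{\PP(Z_{trn})\,\PP(H)}[L(Z_{trn};H)]$. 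The Markov chain $S_m\to H\to L(\cdot;H)$ is precisely what guarantees that the loss depends on the sample only through $H$, so conditioning on any internal randomness of $L$ reduces the problem to a deterministic $L:\ZZ\times\HH\to[0,1]$, and we obtain $R_{gen}(\LL)=\EE_{\PP(Z_{trn},H)}[L]-\EE_{\PP(Z_{trn})\PP(H)}[L]$.

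Second I would invoke the variational characterization of total variation distance: for any measures $P,Q$ on a common space, $\sup_{0\le f\le 1}\big(\EE_P f-\EE_Q f\big)=\Dis{P}{Q}$, with the supremum attained by $\mathbb{I}\{dP>dQ\}$. Applying this to $P=\PP(Z_{trn},H)$, $Q=\PP(Z_{trn})\PP(H)$, and (for the other sign) to $1-L$, yields $|R_{gen}(\LL)|\le\Dis{\PP(Z_{trn},H)}{\PP(Z_{trn})\PP(H)}=1-S(H;Z_{trn})=\info{H}{Z_{trn}}$ for every parametric loss, and at the same time exhibits a bounded loss attaining the bound. Combining with $\mathbb{S}(\LL)=\inf_{\PP(z)}S(H;Z_{trn})\le S(H;Z_{trn})$ gives $\info{H}{Z_{trn}}\le 1-\mathbb{S}(\LL)$. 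Sufficiency of stability is then immediate: if $\lim_m\mathbb{S}(\LL)=1$, then $|R_{gen}(\LL)|\le 1-\mathbb{S}(\LL)\to 0$ uniformly over $\PP(z)$ and over losses. For necessity, if $\LL$ is not stable there is $\delta>0$ such that for infinitely many $m$ there is a distribution $\PP(z)$ with $\info{H}{Z_{trn}}\ge\delta$, and the tightness construction produces, for each such $m$, a bounded parametric loss whose generalization risk stays bounded away from $0$ — contradicting uniform generalization.

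The step that needs genuine care is the tightness/necessity direction. One must verify that the optimal distinguishing loss $L(z,h)=\mathbb{I}\{d\PP(Z_{trn},H)>d\PP(Z_{trn})\PP(H)\}$ is a legitimate member of the admissible class — a measurable $[0,1]$-valued function of $(z,h)$ that respects $S_m\to H\to L(\cdot;H)$ (it does, being a fixed deterministic function of $(z,h)$) — and to handle the sign of $R_{gen}$ by working with $L$ or $1-L$ as needed, together with whatever measure-theoretic regularity is required so that the supremum is actually attained (or approached arbitrarily closely). Everything else is a direct computation once the joint-versus-product reformulation is in place.
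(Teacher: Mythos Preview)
Your argument is correct and matches the approach of the original result. This paper does not itself prove Theorem~\ref{main_theorem} --- it is cited from \cite{alabdulmohsin_nips_2015} --- but the proof of Corollary~\ref{robust_gen_equiv_uniform} in the appendix reveals that the original argument uses precisely your extremal loss, written there as $L^\star(z;H)=\mathbb{I}\{\PP(Z_{trn}=z\mid H)\ge \PP(Z_{trn}=z)\}$, which coincides (up to a null set) with your indicator $\mathbb{I}\{d\PP(Z_{trn},H)>d\PP(Z_{trn})\PP(H)\}$; combined with your joint-versus-product rewriting of $R_{gen}$ and the variational identity for total variation (stated in the paper as Fact~\ref{distance_loss_fact}), this is exactly the intended proof.
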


Theorem \ref{main_theorem} reveals that uniform generalization has, at least, three \emph{equivalent} interpretations: 
\begin{enumerate} 
\item \emph{Statistical Interpretation}: A learning algorithm generalizes uniformly if and only if its expected generalization risk $R_{gen}(\LL)$ vanishes as $m\to\infty$ uniformly across all bounded parametric losses.
\item \emph{Information-Theoretic Interpretation}: A learning algorithm generalizes uniformly if and only if its hypothesis $H$ reveals a \emph{vanishing} amount of information about any \emph{single} observation in $S_m$ as $m\to\infty$. This, for example, is satisfied if $H$ has a finite description length or if it is sufficiently randomized as in differential privacy. 
\item \emph{Algorithmic Interpreation}: A learning algorithm  generalizes uniformly if and only if the contribution of any \emph{single} observation on the hypothesis $H$ vanishes as $m\to\infty$. That is, a learning algorithm generalizes uniformly if and only if it is algorithmically stable. 
\end{enumerate} 

Other results have also been established in \cite{alabdulmohsin_nips_2015} including the data processing inequality, the information-cannot-hurt inequality, and the uniform generalization bound in the finite hypothesis space setting. Some of those results will be used in our proofs in this paper. 

\section{Generalization in Expectation vs. Generalization in Probability} 
The main contribution of this paper is to prove that a uniform generalization in expectation implies a generalization in probability and to derive a tight concentration bound. By contrast, a non-uniform generalization in expectation does not imply that a generalization will actually take place in practice. In addition, we will also establish a \emph{chain rule} for variational information and prove that our large-deviation bound is tight. Interestingly, our proof reveals that uniform generalization is a \emph{robust} property of learning algorithms. Specifically, adding a \emph{finite} amount of information (in bits) to a hypothesis $H$ that generalizes uniformly cannot remove its uniform generalization property.

\subsection{Non-Uniform Generalization} 
We begin by showing why a non-uniform generalization in expectation does not imply concentration\footnote{Detailed proofs are available in the supplementary file.}.

\begin{prop}\label{prop_non_uniform}
There exists a learning algorithm $\LL:\,\cup_{m=1}^\infty\,\ZZ^m\to\HH$ and a parametric loss $L(\cdot; H): \ZZ\to[0,1]$ such that the expected generalization risk is $R_{gen}(\LL)=0$ for all $m\ge 1$, but for all $m\ge 1$ we have $\PP\Big\{\big|R_{true}(H) - R_{emp}(H; S_m)\big|=\frac{1}{2}\Big\} = 1$, where the probability is evaluated over the random choice of $S_m$ and the internal randomness of $\LL$. 
\end{prop}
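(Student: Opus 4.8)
The plan is to construct an explicit example in which the hypothesis $H$ memorizes just enough of the sample that, for one cleverly chosen loss, the empirical and true risks differ by exactly $\frac12$ deterministically, yet the two errors cancel in expectation. Concretely, I would take $\ZZ=\{0,1\}$ with the uniform distribution $\PP(0)=\PP(1)=\frac12$, and let the learning algorithm simply output the entire sample, i.e. $H=S_m\in\ZZ^m$ (so $\LL$ is the identity map, which is of course not stable). The loss I would use is $L(z;H)=\mathbb{I}\{z \text{ appears in } H\}$ --- equivalently, $L(z;H)=1$ iff $z$ is one of the observed values. Since with probability $1$ the sample contains at least one $0$ and at least one $1$ once we also allow the degenerate reading, I instead pick the sharper loss $L(z;H)=\mathbb{I}\{z=Z_1\}$, the indicator that $z$ equals the first observation; this satisfies the Markov chain $S_m\to H\to L(\cdot;H)$ because $Z_1$ is a function of $H=S_m$.

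Next I would compute the two risks for this loss. The true risk is $R_{true}(H)=\EE_{Z\sim\PP}[\mathbb{I}\{Z=Z_1\}]=\PP(Z=Z_1)=\frac12$, since $Z_1\in\{0,1\}$ and $Z$ is an independent uniform bit. The empirical risk is $R_{emp}(H;S_m)=\frac1m\sum_{i=1}^m \mathbb{I}\{Z_i=Z_1\}$, which is at least $\frac1m$ (from the $i=1$ term) and equals $\frac1m\cdot(\text{number of }Z_i\text{ equal to }Z_1)$. To force a deterministic gap of exactly $\frac12$ I would restrict to $m=2$ and design the algorithm's randomness to condition on the event $Z_1\neq Z_2$; but that event does not have probability $1$. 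The cleaner route, and the one I would actually take, is to let $\LL$ be randomized: on input $S_m$, the algorithm internally draws an independent uniform bit $B$ and outputs $H=(S_m,B)$, and the loss is $L(z;H)=\mathbb{I}\{z=B\}$. Then $R_{true}(H)=\PP(Z=B)=\frac12$ always, while $R_{emp}(H;S_m)=\frac1m\sum_i \mathbb{I}\{Z_i=B\}$ is the empirical fraction of sample points equal to the internal coin $B$ --- and because $B$ is independent of $S_m$ this fraction concentrates near $\frac12$, which is the wrong direction. So the final design reverses the dependence: let $H$ record one sample point, say $H=Z_1$, and let $L(z;H)=1-\mathbb{I}\{z=H\}=\mathbb{I}\{z\neq Z_1\}$ when $m=1$, giving $R_{emp}=0$ but $R_{true}=\frac12$, hence $|R_{true}-R_{emp}|=\frac12$ surely; to also get $R_{gen}(\LL)=\EE[R_{emp}-R_{true}]=0$ I would symmetrize by having $\LL$ flip a fair internal coin $C$ and use loss $L_C$ equal to $\mathbb{I}\{z\neq Z_1\}$ if $C=0$ and $\mathbb{I}\{z=Z_1\}$ if $C=1$ --- but since the loss must be a single fixed function $L(\cdot;H)$, I encode $C$ into $H$ as well, $H=(Z_1,C)$, and set $L(z;H)=\mathbb{I}\{z\neq Z_1\}$ if $C=0$, $\mathbb{I}\{z=Z_1\}$ if $C=1$. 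Then for every realization $|R_{true}(H)-R_{emp}(H;S_1)|=\frac12$, while in expectation the $C=0$ contribution $(0-\frac12)=-\frac12$ and the $C=1$ contribution $(1-\frac12)=+\frac12$ average to $0$, so $R_{gen}(\LL)=0$ for $m=1$; for general $m$ I would keep using only $Z_1$ and note the same cancellation holds with $R_{emp}(H;S_m)=\frac1m\sum_i\mathbb{I}\{Z_i\bowtie Z_1\}$ replaced appropriately, or simply state the construction for each fixed $m$ using the single designated observation $Z_1$ and verifying the two identities directly.

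The main obstacle is bookkeeping rather than mathematics: the loss is required to be a deterministic function of $H$ alone (the Markov chain $S_m\to H\to L(\cdot;H)$), so all the auxiliary randomness that produces the cancellation in expectation must be pushed into $H$ itself, and one must then double-check that with this enlarged $H$ the empirical--true gap is still exactly $\frac12$ on every realization and not merely in expectation. Once the construction is pinned down, verifying $R_{gen}(\LL)=0$ is a two-line expectation computation using $\PP(Z=Z_1)=\frac12$, and verifying $\PP\{|R_{true}-R_{emp}|=\frac12\}=1$ is immediate because each of the two branches gives a gap of exactly $\frac12$. I would close by remarking that this $\LL$ is maximally \emph{unstable} --- $H$ reveals $Z_1$ exactly --- which is precisely why Theorem~\ref{main_theorem} does not apply and no concentration is forced.
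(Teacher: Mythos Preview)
Your symmetrization idea---encode a fair coin $C$ in the hypothesis so that the empirical risk is $0$ on one branch and $1$ on the other, with true risk $\tfrac12$ on both---is exactly the mechanism the paper uses. However, your concrete construction only works for $m=1$ and breaks for every $m\ge 2$, so the proposition as stated is not proved.

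The problem is the choice $\mathcal{Z}=\{0,1\}$ together with $H=(Z_1,C)$. For $m\ge 2$ the empirical risk $R_{emp}(H;S_m)=\frac{1}{m}\sum_{i=1}^m \mathbb{I}\{Z_i\bowtie Z_1\}$ is \emph{not} determined by $H$: it depends on how many of $Z_2,\ldots,Z_m$ happen to equal $Z_1$, which is random. For instance with $m=2$, on the event $\{Z_1\neq Z_2\}$ (probability $\tfrac12$) one gets $R_{emp}=\tfrac12$ on both branches $C\in\{0,1\}$, hence $|R_{true}-R_{emp}|=0$, not $\tfrac12$. So $\PP\{|R_{true}-R_{emp}|=\tfrac12\}<1$ for every $m\ge 2$. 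Your final sentence (``verifying the two identities directly'') would fail at exactly this point. Note that $R_{gen}(\LL)=0$ \emph{does} survive for all $m$ by the symmetry in $C$; it is only the almost-sure gap that collapses.

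The missing idea is to kill repeated observations. The paper takes $\mathcal{Z}=[0,1]$ with an atomless distribution and lets $H$ encode the \emph{entire} sample $S_m$ (plus, in effect, a coin): the hypothesis is a predictor that agrees with a fixed balanced $h^\star$ on $S_m$ when $C=1$, disagrees on $S_m$ when $C=0$, and is uniformly random off $S_m$. Atomlessness forces $R_{true}(H)=\tfrac12$ almost surely, while memorizing all of $S_m$ pins $R_{emp}(H;S_m)\in\{0,1\}$ deterministically, giving the gap $\tfrac12$ with probability $1$ for every $m$. If you want to stay close to your construction, the minimal fix is: replace $\{0,1\}$ by an atomless space, replace $Z_1$ by all of $S_m$ in $H$, and take $L(z;H)=\mathbb{I}\{z\in S_m\}$ on one branch and $1-\mathbb{I}\{z\in S_m\}$ on the other---then the gap is $1$ rather than $\tfrac12$, so you would also need a device (like the paper's balanced $h^\star$) to center the true risk at $\tfrac12$.
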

%\begin{proof}The proof is given in Appendix \ref{proof_non_uniform_gen}.
%\end{proof}

Proposition \ref{prop_non_uniform} shows that a generalization in expectation does not imply a generalization in probability. Importantly, it is crucial to observe that the learning algorithm constructed in the proof of Proposition \ref{prop_non_uniform} does not, in fact, generalize \emph{uniformly} in expectation. Indeed, this latter observation is not a coincidence as will be proved later in Theorem \ref{gen_prob_theorem}.

\subsection{Robustness of Uniform Generalization}
Next, we prove that uniform generalization is a robust property of learning algorithms. We will use this fact later to prove that a uniform generalization in expectation implies a generalization in probability. In order to achieve this, we begin with the following chain rule.
\begin{definition}[Conditional Variational Information]
The conditional variational information between the two random variables $A$ and $B$ given $C$ is defined by: 
\begin{equation*}
\infoc{A}{B}{C} = \EE_C\big[\Dis{\PP(A,B\,|\,C)}{\PP(A|C)\cdot\PP(B|C)}\big],
\end{equation*}
which is analogous to the notion of conditional mutual information in information theory \citep{cover2012elements}.
\end{definition}
\begin{theorem}[Chain Rule]\label{chain_rule_theorem}
Let $(H_1, \ldots, H_k)$ be a sequence of random variables. Then, for any random variable $Z$, we have: $\info{Z}{(H_1,..., H_k)} \le \sum_{t=1}^k \infoc{Z}{H_t}{(H_1,..., H_{t-1})}$
\end{theorem}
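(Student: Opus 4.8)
The plan is to mimic the standard telescoping argument that proves the chain rule for mutual information, but carried out with the total-variation "variational information" $\info{\cdot}{\cdot}$ in place of KL divergence. The key structural fact is that $\info{Z}{Y} = \Dis{\PP(Z,Y)}{\PP(Z)\PP(Y)}$, so I want to show that the total variation distance between the true joint law $\PP(Z, H_1, \ldots, H_k)$ and the "independence reference" $\PP(Z)\PP(H_1, \ldots, H_k)$ decomposes (sub-additively) into a sum of conditional terms, one per stage $t$, each measuring how much $H_t$ deviates from being conditionally independent of $Z$ given the prefix $H_{<t} := (H_1, \ldots, H_{t-1})$.

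First I would set up a chain of $k+1$ intermediate probability measures on the space of $(Z, H_1, \ldots, H_k)$. Let $Q_0 = \PP(Z)\,\PP(H_1,\ldots,H_k)$ be the fully product reference, let $Q_k = \PP(Z, H_1, \ldots, H_k)$ be the true joint, and for $0 \le t \le k$ let $Q_t$ be the hybrid measure that uses the true conditional structure $\PP(Z \mid H_{\le t})$ on the first $t$ coordinates' interaction with $Z$ while keeping $Z$ independent of $(H_{t+1}, \ldots, H_k)$ given $H_{\le t}$ — concretely, $Q_t(z, h_1, \ldots, h_k) = \PP(h_1, \ldots, h_k)\,\PP(z \mid h_1, \ldots, h_t)$. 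Then $\info{Z}{(H_1,\ldots,H_k)} = \Dis{Q_k}{Q_0} \le \sum_{t=1}^k \Dis{Q_t}{Q_{t-1}}$ by the triangle inequality for total variation. The remaining work is to identify each telescoping term $\Dis{Q_t}{Q_{t-1}}$ with the conditional variational information $\infoc{Z}{H_t}{H_{<t}}$. Since $Q_t$ and $Q_{t-1}$ differ only in replacing $\PP(z \mid h_{\le t-1})$ by $\PP(z \mid h_{\le t})$ while sharing the marginal $\PP(h_1,\ldots,h_k)$, the $\ell_1$ distance factors: one can integrate out $(h_{t+1},\ldots,h_k)$ (they contribute the same conditional mass on both sides, summing to $1$), leaving $\sum_{h_{\le t-1}} \PP(h_{\le t-1}) \sum_{h_t, z} |\PP(h_t, z \mid h_{\le t-1}) - \PP(h_t \mid h_{\le t-1})\PP(z \mid h_{\le t-1})|$, which is exactly $2\,\EE_{H_{<t}}\big[\Dis{\PP(Z,H_t \mid H_{<t})}{\PP(Z\mid H_{<t})\PP(H_t\mid H_{<t})}\big] = 2\,\infoc{Z}{H_t}{H_{<t}}$ after dividing by $2$. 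Care is needed to check that marginalizing over the trailing variables in $Q_t$ and $Q_{t-1}$ genuinely yields $\PP(z \mid h_{\le t})$ and $\PP(z \mid h_{\le t-1})$ respectively as the induced conditional of $Z$ — this follows because $Q_t$'s $(Z,H_{\le t})$-marginal is the true $\PP(Z, H_{\le t})$.

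The main obstacle is the bookkeeping in this marginalization step: one has to verify that the hybrid measures $Q_t$ are well-defined probability measures and that "collapsing" the trailing coordinates does not distort the total variation distance — i.e., that $\Dis{Q_t}{Q_{t-1}}$ equals the total variation distance between the $(Z, H_{\le t})$-marginals. That reduction holds because $Q_t$ and $Q_{t-1}$ assign the same conditional law to $(H_{t+1},\ldots,H_k)$ given $H_{\le t}$ (namely the true one, read off from $\PP(h_1,\ldots,h_k)$), so the disintegration argument goes through cleanly; the only subtlety is handling null sets / conditional regular versions, which in the discrete or standard-Borel setting used in this paper is routine. I would present the argument in the discrete case for clarity and remark that it extends verbatim. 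Summing the $k$ identities then gives $\info{Z}{(H_1,\ldots,H_k)} \le \sum_{t=1}^k \infoc{Z}{H_t}{H_{<t}}$, with the inequality (rather than equality) coming solely from the triangle-inequality step.
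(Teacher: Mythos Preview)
Your proposal is correct and is essentially the same argument as the paper's. The paper proves the case $k=2$ by inserting the single intermediate measure $\PP(Z,H_1)\PP(H_2\mid H_1)$ (which is exactly your $Q_1$), applies the triangle inequality, identifies the two pieces as $\info{Z}{H_1}$ and $\infoc{Z}{H_2}{H_1}$ via the same ``sum out the trailing coordinate'' step you describe, and then iterates; your hybrid chain $Q_0,\ldots,Q_k$ is just the unrolled version of that induction.
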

Although the chain rule above provides an upper bound, the upper bound is tight in the following sense: 
\begin{prop}
For any random variables $A, B, $ and $C$, we have $\Big|\info{A}{(B,C)} - \infoc{A}{C}{B} \Big|\le \info{A}{B}$ and $\Big|\info{A}{(B,C)} - \info{A}{B} \Big| \le \infoc{A}{C}{B}$. 
\end{prop}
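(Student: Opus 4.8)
The plan is to realize all three quantities as total variation distances between explicit laws on the product space of $(A,B,C)$ and then appeal to the triangle inequality. Introduce the measures
\begin{align*}
P_1 &= \PP(A,B,C),\\
P_2 &= \PP(B)\,\PP(A|B)\,\PP(C|B),\\
P_3 &= \PP(A)\,\PP(B,C),
\end{align*}
so that $P_1$ is the true joint law, $P_3$ is the law making $A$ independent of $(B,C)$, and $P_2$ is the ``halfway'' law that keeps the true marginal $\PP(B)$ and the true conditionals $\PP(A|B),\PP(C|B)$ but makes $A$ and $C$ conditionally independent given $B$ (note $P_2=\PP(A|B)\,\PP(B,C)$ as well). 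Choosing this particular $P_2$ is the only creative step; the rest is bookkeeping.

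The first step is to check the identifications $\Dis{P_1}{P_3}=\info{A}{(B,C)}$, $\Dis{P_1}{P_2}=\infoc{A}{C}{B}$, and $\Dis{P_2}{P_3}=\info{A}{B}$. The first is the definition of variational information once $(B,C)$ is read as a single variable. For the second, write $P_1=\PP(B)\,\PP(A,C|B)$, pull the nonnegative weight $\PP(B)$ out of the $L_1$ norm, and recognize the remainder as $\EE_B\big[\Dis{\PP(A,C|B)}{\PP(A|B)\PP(C|B)}\big]=\infoc{A}{C}{B}$. For the third, observe that $P_2=\PP(B,C)\,\PP(A|B)$ and $P_3=\PP(B,C)\,\PP(A)$ differ only in the last factor, so their difference carries the weight $\PP(B,C)$; integrating out $C$ collapses $\PP(B,C)$ to $\PP(B)$ and leaves $\Dis{\PP(A,B)}{\PP(A)\PP(B)}=\info{A}{B}$.

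With these identities the proposition is purely formal. The triangle inequality gives $\Dis{P_1}{P_3}\le\Dis{P_1}{P_2}+\Dis{P_2}{P_3}$ and $\Dis{P_1}{P_2}\le\Dis{P_1}{P_3}+\Dis{P_3}{P_2}$, which together say $\big|\info{A}{(B,C)}-\infoc{A}{C}{B}\big|\le\info{A}{B}$; and $\Dis{P_1}{P_3}\le\Dis{P_1}{P_2}+\Dis{P_2}{P_3}$ together with $\Dis{P_2}{P_3}\le\Dis{P_2}{P_1}+\Dis{P_1}{P_3}$ give $\big|\info{A}{(B,C)}-\info{A}{B}\big|\le\infoc{A}{C}{B}$.

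I do not expect a serious obstacle here; the only point needing care is the routine one of handling events $\{\PP(B)=0\}$, on which $\PP(A|B)$ and $\PP(C|B)$ are defined only by convention — but every term involving them carries the factor $\PP(B)$ and so vanishes, which is what legitimizes the factorizations above. The argument makes no use of discreteness or any topology on $A,B,C$; it runs verbatim for general probability measures with Radon–Nikodym densities in place of mass functions.
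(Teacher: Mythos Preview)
Your proof is correct and takes a cleaner, more unified route than the paper's. The paper argues the two inequalities separately: for the first it bounds $\infoc{A}{C}{B}$ from above via a triangle inequality on the conditional laws $\PP(C|\cdot)$ to get the lower side, and then invokes the already-proved chain rule (Theorem~\ref{chain_rule_theorem}) for the upper side; for the second it invokes the separate ``Information Cannot Hurt'' fact $\info{A}{B}\le\info{A}{(B,C)}$ together with the chain rule again. Your argument instead realizes all three quantities simultaneously as pairwise total-variation distances among the three explicit laws $P_1,P_2,P_3$ on the common product space, so that all four one-sided bounds drop out of a single triangle inequality. This makes the proof self-contained---it re-derives the chain rule itself as the inequality $\Dis{P_1}{P_3}\le\Dis{P_1}{P_2}+\Dis{P_2}{P_3}$ rather than citing it---and renders the symmetry between the two claimed bounds transparent. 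The paper's approach has the mild expository advantage of reusing building blocks already established; yours has the advantage of a single idea that handles everything at once and would in fact also serve as an alternative proof of Theorem~\ref{chain_rule_theorem}.
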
 
In other words, the inequality in the chain rule $\info{A}{(B,C)}\le \info{A}{B} + \infoc{A}{C}{B}$ becomes an equality if $\min\{\info{A}{B},\;\infoc{A}{C}{B}\} = 0$.  

The chain rule provides a recipe for computing the bias of estimators when we have a composition of hypotheses $(H_1,\ldots, H_k)$, whether this composition is adaptive or non-adaptive. Recently, \cite{control_bias_it_2016} proposed an \emph{information budget} framework for controlling the bias of estimators by controlling the mutual information between $H$ and the entire sample $S_m$. The proposed framework rests on the well-known chain rule for mutual information. Here, we note that the chain rule for variational information in Theorem \ref{chain_rule_theorem} lends further support to the information budget framework.

Next, we use the chain rule in Theorem \ref{chain_rule_theorem} to prove that uniform generalization is a robust property of learning algorithms. More precisely, if $K$ has a finite domain, then a hypothesis $H$ generalizes uniformly in expectation if and only if the pair $(H, K)$ generalizes uniformly in expectation.  Hence, adding any finite amount of information (in bits) to a hypothesis cannot alter its uniform generalization property\footnote{Note, by contrast, that the proof of Proposition \ref{prop_non_uniform} illustrates an example where a hypothesis $H$ may generalize perfectly in expectation w.r.t. a fixed parametric loss, but a single bit of information suffices to destroy this generalization advantage. This never occurs when $H$ generalizes uniformly since uniform generalization is a robust property.}. 
\begin{theorem}\label{robustness_theorem}
Let $\LL:\,\cup_{m=1}^\infty\,\ZZ^m\to\HH$ be a learning algorithm whose hypothesis is $H\in\HH$, which is obtained from a sample $S_m$. Let $K\in\mathcal{K}$ be a different hypothesis that is obtained from the same sample $S_m$. If $Z_{trn}\sim S_m$ is a random variable whose value is drawn uniformly at random from $S_m$, then: 
\begin{align*}
\info{Z_{trn}}{(H, K)}\le (1+\frac{|\mathcal{K}|}{2})\cdot \info{Z_{trn}}{H} + \sqrt{\frac{\log |\mathcal{K}|}{2m}}
\end{align*}
\end{theorem}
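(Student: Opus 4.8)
The plan is to apply the chain rule of Theorem~\ref{chain_rule_theorem} to peel off the contribution of $K$, and then to control the resulting conditional variational information by exploiting that $K$ takes only $|\mathcal{K}|$ values and that $Z_{trn}$ is a single coordinate of an $m$-element i.i.d.\ sample. Instantiating Theorem~\ref{chain_rule_theorem} with $k=2$, $H_1=H$, $H_2=K$ (the conditioning on the empty prefix in the $t=1$ term being vacuous, so that summand is $\info{Z_{trn}}{H}$) gives
\[
\info{Z_{trn}}{(H,K)} \;\le\; \info{Z_{trn}}{H} + \infoc{Z_{trn}}{K}{H},
\]
so it suffices to prove $\infoc{Z_{trn}}{K}{H}\le \tfrac{|\mathcal{K}|}{2}\,\info{Z_{trn}}{H} + \sqrt{\tfrac{\log|\mathcal{K}|}{2m}}$.

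For the $\sqrt{\cdot}$ residual I would use $\info{Z_{trn}}{K}\le \sqrt{\tfrac{\log|\mathcal{K}|}{2m}}$, the finite-hypothesis-space uniform generalization bound of \cite{alabdulmohsin_nips_2015} specialised to the finite range $\mathcal{K}$: by Pinsker's inequality $\info{Z_{trn}}{K}\le\sqrt{\tfrac12\, I(Z_{trn};K)}$, and $I(Z_{trn};K)\le\tfrac1m I(S_m;K)\le \tfrac1m H(K)\le\tfrac{\log|\mathcal{K}|}{m}$ because $Z_{trn}=Z_T$ with $T$ uniform on $\{1,\dots,m\}$ and independent of $S_m$, whence $I(Z_T;K)\le I(Z_T;K\mid T)=\tfrac1m\sum_t I(Z_t;K)\le\tfrac1m I(S_m;K)$ by sub-additivity of mutual information across the independent coordinates $Z_1,\dots,Z_m$.

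The crux is to bound $\infoc{Z_{trn}}{K}{H}$ itself. Writing it out and grouping by the value $k\in\mathcal{K}$,
\[
\infoc{Z_{trn}}{K}{H}=\sum_{h}\PP(h)\sum_{k}\PP(k\mid h)\,\Dis{\PP(Z_{trn}\mid h,k)}{\PP(Z_{trn}\mid h)},
\]
I would expand each inner distance by the triangle inequality against the unconditional law $\PP(Z_{trn})$ and the $k$-conditional law $\PP(Z_{trn}\mid k)$ and then resum: one aggregate collapses to $\info{Z_{trn}}{H}$, another to $\info{Z_{trn}}{K}$, and the cross terms are treated through the Bayes identity $\PP(z\mid h,k)=\PP(z\mid h)\,\PP(k\mid z,h)/\PP(k\mid h)$. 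The factor $|\mathcal{K}|$ appears because the $1/\PP(k\mid h)$ generated by this identity, after being reweighted by $\PP(k\mid h)$ and summed over the $|\mathcal{K}|$ values of $k$, contributes a $|\mathcal{K}|$, and the $\tfrac12$ of the total variation distance turns it into $|\mathcal{K}|/2$. Combining this with the chain rule and the residual bound yields the statement.

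The main obstacle is this last step. The estimate must not route through any quantity of the form $\info{H}{K}+\cdots$: since $H$ and $K$ are both functions of $S_m$ they may be nearly perfectly correlated (e.g.\ $K$ a coarsening of $H$), so $\info{H}{K}$ can be of order one; every comparison therefore has to stay relative to the $H$-conditional (or $K$-conditional) laws. At the same time the residual has to keep its $1/\sqrt{m}$ decay, which is available only from the ``single-coordinate'' argument of the previous paragraph, so the bookkeeping must direct exactly that part of the excess into the residual while charging the genuinely conditioning-induced part to $\tfrac{|\mathcal{K}|}{2}\,\info{Z_{trn}}{H}$.
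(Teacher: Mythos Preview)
Your overall architecture matches the paper: apply the chain rule to reduce to bounding $\infoc{Z_{trn}}{K}{H}$, and handle the $\sqrt{\log|\mathcal{K}|/(2m)}$ residual via Pinsker and sub-additivity of mutual information across the i.i.d.\ coordinates. That residual argument is correct and is essentially what the paper invokes through Theorem~3 of \cite{alabdulmohsin_nips_2015}.

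The gap is in the main step. Your proposed triangle inequality in $Z$-space, inserting $\PP(Z_{trn})$ and $\PP(Z_{trn}\mid k)$ between $\PP(Z_{trn}\mid h,k)$ and $\PP(Z_{trn}\mid h)$, yields after resummation the three aggregates $\info{Z_{trn}}{H}$, $\info{Z_{trn}}{K}$, and $\infoc{Z_{trn}}{H}{K}$. The last of these is not controllable by $\info{Z_{trn}}{H}$ and $|\mathcal{K}|$ alone: conditioning on a coarse $K$ need not reduce the dependence between $Z_{trn}$ and $H$, and $\infoc{Z_{trn}}{H}{K}$ can be of the same order as $\info{Z_{trn}}{(H,K)}$ itself. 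The Bayes identity you cite only rewrites $\infoc{Z_{trn}}{K}{H}$ as $\EE_{H,Z_{trn}}\Dis{\PP(K\mid H)}{\PP(K\mid Z_{trn},H)}$; by itself it does not deliver the $|\mathcal{K}|/2$ factor.

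The paper's resolution is to work in $K$-space and insert a single, well-chosen intermediate point: the \emph{decoupled} mixture $\EE_{Z'\sim\PP(z)}[\PP(K\mid Z',h)]$, where $Z'$ is drawn from the marginal law rather than from $\PP(z\mid h)$. Since $\PP(K\mid h)=\EE_{Z'\sim\PP(z\mid h)}[\PP(K\mid Z',h)]$, the triangle inequality produces two pieces. For the first, expand the total variation over $k\in\mathcal{K}$; for each fixed $k$ the map $z'\mapsto \PP(k\mid z',h)\in[0,1]$, so Fact~\ref{distance_loss_fact} bounds the difference of its two expectations by $\Dis{\PP(Z'\mid h)}{\PP(Z')}$. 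Summing over the $|\mathcal{K}|$ values of $k$ and averaging over $h$ gives exactly $\tfrac{|\mathcal{K}|}{2}\,\info{Z_{trn}}{H}$. The second piece is the conditional variational information in the hypothetical scenario where $H$ is independent of $S_m$, so the finite-range bound applies directly for each $h$ and yields the residual. That decoupling point is the idea you were missing; it is precisely what routes the conditioning-induced part to $\tfrac{|\mathcal{K}|}{2}\,\info{Z_{trn}}{H}$ and the remainder to $\sqrt{\log|\mathcal{K}|/(2m)}$.
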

%\begin{proof}
%The inequality follows from the chain rule in Theorem \ref{chain_rule_theorem} and the uniform generalization bound for the finite hypothesis space (Theorem 3 in \cite{alabdulmohsin_nips_2015}). Specifically, if $\mathcal{K}$ is finite, then the variational information between the hypothesis $K$ and a random training example $Z_{trn}$ in a sample $S_m$ of size $m$ is bounded from above by $\sqrt{\frac{\log|\mathcal{K}|}{2m}}$, where the bound holds for any distribution of observations and any learning algorithm. Hence, the bound holds when both $K$ and $Z_{trn}$ are conditioned on $H$. 
%\end{proof} 
\subsection{Uniform Generalization Implies Concentration}
Theorem \ref{robustness_theorem} shows that adding a finite amount of information (in bits) cannot remove the uniform generalization property of learning algorithms. We will use this fact, next, to prove that a uniform generalization in expectation implies a generalization in probability. 

The intuition behind the proof is as follows. Suppose we have a hypothesis $H$ that generalizes uniformly in expectation but, for the purpose of obtaining a contradiction, suppose that there exists a parametric loss $L(\cdot; H)$ that does not generalize in probability. Then, \emph{adding} little information to the hypothesis $H$ will allow us to construct a \emph{different} parametric loss that does not generalize in expectation. In particular, we will only to need to know whether the empirical risk w.r.t. $L(\cdot; H)$ is greater than, approximately equal to, or is less than the true risk w.r.t. the same loss. This is described in, at most, two bits. Knowing this additional information, we can define a new parametric loss that does not generalize in expectation, which contradicts the statement of Theorem \ref{robustness_theorem}. This line of reasoning is formalized in the following theorem. 
\begin{theorem}\label{gen_prob_theorem}
Let $\LL:\,\cup_{m=1}^\infty\,\ZZ^m\to\HH$ be a learning algorithm, whose risk is evaluated using a parametric loss function $L(\cdot; H):\ZZ\to[0,1]$. Then: 
\begin{align*}
&\PP\Big\{\big|R_{emp}(H; S_m) - R_{true}(H)\big| \ge t\Big\} \\
&\le\frac{5}{2t}\Big[\info{Z_{trn}}{H} + \sqrt{\frac{\log 9}{25 m}}\Big]\le \frac{5}{2t}\Big[1-\mathbb{S}(\LL) + \sqrt{\frac{\log 9}{25 m}}\Big], 
\end{align*} 
where $\mathbb{S}(\LL)$ is the algorithmic stability of $\LL$ given  in Definition \ref{algorithmic_stability_def}, and the probability is evaluated over the random choice of $S_m$ and the internal randomness of $\LL$. In particular, if $\LL$ generalizes uniformly, i.e. $\mathbb{S}(\LL)\to 1$ as $m\to\infty$, it generalizes in probability for any chosen parametric loss. 
\end{theorem}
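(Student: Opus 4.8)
The plan is to argue by \emph{information amplification}: I will turn a hypothetical failure of concentration for the given loss $L(\cdot;H)$ into a failure of generalization \emph{in expectation} for a marginally enlarged hypothesis, and then invoke Theorem~\ref{robustness_theorem} to bound how much information that enlargement can cost. Fix a distribution $\PP(z)$ and abbreviate $\Delta = R_{emp}(H;S_m) - R_{true}(H)$. First I would record the \emph{signed} deviation in a three-valued auxiliary variable $K\in\mathcal{K}=\{-1,0,+1\}$: set $K=+1$ if $\Delta\ge t$, $K=-1$ if $\Delta\le -t$, and $K=0$ otherwise. Because $H$ is a (possibly randomized) function of $S_m$ and $R_{true}(H)$ is a deterministic function of $H$, the pair $(H,K)$ is a legitimate hypothesis obtained from $S_m$, with $|\mathcal{K}|=3$; note that $\{K\neq 0\}$ is exactly the event $\big\{|\Delta|\ge t\big\}$ we must control.

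Second I would exhibit a bounded parametric loss $L'(\cdot;(H,K)):\ZZ\to[0,1]$ for the hypothesis $(H,K)$ that is \emph{sign-matched} to $\Delta$: put $L'(Z;(H,K))=L(Z;H)$ when $K=+1$, $L'(Z;(H,K))=1-L(Z;H)$ when $K=-1$, and $L'(Z;(H,K))\equiv 0$ when $K=0$. Since $L'$ is a deterministic function of $Z$, $H$ and $K$, the Markov chain $S_m\to(H,K)\to L'(\cdot;(H,K))$ holds and $L'$ is $[0,1]$-valued. A one-line computation then shows that the empirical-minus-true risk of $(H,K)$ under $L'$ equals $\Delta$ on $\{K=+1\}$, equals $-\Delta$ on $\{K=-1\}$, and equals $0$ on $\{K=0\}$, hence is at least $t\,\mathbb{I}\{K\neq 0\}$ in every case. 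Taking expectations, the expected generalization risk (with respect to $L'$) of the algorithm $\LL'$ that outputs $(H,K)$ satisfies $R_{gen}(\LL')\ge t\cdot\PP\big\{|\Delta|\ge t\big\}$.

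Third I would close the loop with the two cited results. By Theorem~\ref{main_theorem} applied to $\LL'$, together with the symmetry of variational information, $t\cdot\PP\big\{|\Delta|\ge t\big\}\le R_{gen}(\LL')\le |R_{gen}(\LL')|\le \info{Z_{trn}}{(H,K)}$. By Theorem~\ref{robustness_theorem} with $|\mathcal{K}|=3$, $\info{Z_{trn}}{(H,K)}\le \tfrac{5}{2}\,\info{Z_{trn}}{H}+\sqrt{\tfrac{\log 3}{2m}}$, and since $2\log 3=\log 9$ we may rewrite $\sqrt{\tfrac{\log 3}{2m}}=\tfrac{5}{2}\sqrt{\tfrac{\log 9}{25m}}$, so the bound becomes $\tfrac{5}{2}\big[\info{Z_{trn}}{H}+\sqrt{\tfrac{\log 9}{25m}}\big]$. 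Dividing by $t$ gives the first stated inequality; the estimate $\info{Z_{trn}}{H}=\info{H}{Z_{trn}}\le 1-\mathbb{S}(\LL)$ from Theorem~\ref{main_theorem} gives the second; and because uniform generalization is equivalent to $\mathbb{S}(\LL)\to 1$ (Theorem~\ref{main_theorem}), the right-hand side tends to $0$ as $m\to\infty$ for each fixed $t>0$, yielding the ``in particular'' claim.

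The main obstacle is conceptual rather than computational: one has to hit on the right enlargement---the few-bit \emph{sign-of-deviation} record $K$ paired with the \emph{sign-matched} loss $L'$---so that the bias accumulates \emph{additively} (every outcome with $|\Delta|\ge t$ contributes $+t$ rather than cancelling in the average), while the tiny range $|\mathcal{K}|=3$ keeps the added information under control via Theorem~\ref{robustness_theorem}. The residual work is bookkeeping: checking that $(H,K)$ is a bona fide hypothesis, that $L'\in[0,1]$ respects the Markov chain, and that the (fixed-distribution) bound of Theorem~\ref{robustness_theorem} upgrades to the distribution-free statement once $\info{Z_{trn}}{H}$ is replaced by $1-\mathbb{S}(\LL)$.
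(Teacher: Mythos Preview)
Your proposal is correct and follows essentially the same approach as the paper: define the ternary sign-of-deviation variable $K\in\{-1,0,+1\}$, construct the sign-matched loss $L'$ so that the expected generalization risk of $(H,K)$ is at least $t\cdot\PP\{|\Delta|\ge t\}$, and then bound $\info{Z_{trn}}{(H,K)}$ via Theorem~\ref{robustness_theorem} with $|\mathcal{K}|=3$. Your write-up is in fact more explicit than the paper's (you spell out the Markov-chain check for $L'$ and the algebraic identity $\sqrt{\log 3/(2m)}=\tfrac{5}{2}\sqrt{\log 9/(25m)}$), but the argument is the same.
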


The same proof technique used in Theorem \ref{gen_prob_theorem} also implies the following concentration bound, which is useful when $I(H; S_m) = o(m)$. The following bound compares well with the bound derived in \cite{control_bias_it_2016} using properties of sub-Gaussian loss functions. 
\begin{prop}\label{prop::mutual_info_bound}
Let $\LL:\,\cup_{m=1}^\infty\,\ZZ^m\to\HH$ be a learning algorithm, whose risk is evaluated using a parametric loss function $L(\cdot; H):\ZZ\to[0,1]$. Then: 
\begin{align*}
\PP\Big\{\big|R_{emp}(H; S_m) - R_{true}(H)\big| \ge t\Big\}\le\frac{1}{t}\sqrt{\frac{I(S_m; H) + 3}{2m}}, 
\end{align*} 
\end{prop}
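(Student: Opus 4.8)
The plan is to mimic the proof of Theorem~\ref{gen_prob_theorem}, but to replace the final appeal to the robustness bound of Theorem~\ref{robustness_theorem} with a direct estimate of the relevant variational information in terms of the ordinary mutual information $I(S_m;H)$. Fix $t>0$ and abbreviate $\delta=\PP\big\{\,|R_{emp}(H;S_m)-R_{true}(H)|\ge t\,\big\}$. First I would introduce an auxiliary random variable $W=W(S_m,H)$ that records the sign of $R_{emp}(H;S_m)-R_{true}(H)$ (ternary, to cover the two-sided event), so that $W$ is a deterministic function of the sample and the hypothesis and has a bounded range. Set $H^\star=(H,W)$ and regard $H^\star$ as the output of an augmented learning algorithm $\LL^\star$ run on $S_m$. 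To this augmented hypothesis I attach the \emph{reflected} parametric loss
\[ L^\star\big(z;(h,w)\big)\;=\;\tfrac{1-w}{2}\;+\;w\,L(z;h), \]
which maps into $[0,1]$ whenever $w\in\{-1,0,1\}$ and is a deterministic function of $(h,w)$; in particular it satisfies the required Markov chain $S_m\to H^\star\to L^\star(\cdot;H^\star)$.

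The next step is to compute the generalization risk of $L^\star$. Since $W$ does not depend on the index $i$, one gets $R_{emp}(H^\star;S_m)=\tfrac{1-W}{2}+W\,R_{emp}(H;S_m)$ and, for each fixed value of $H^\star$, $R_{true}(H^\star)=\tfrac{1-W}{2}+W\,R_{true}(H)$. Subtracting and using the definition of $W$ yields $R_{emp}(H^\star;S_m)-R_{true}(H^\star)=W\big(R_{emp}(H;S_m)-R_{true}(H)\big)=\big|R_{emp}(H;S_m)-R_{true}(H)\big|$, so that by Markov's inequality
\[ R_{gen}(\LL^\star)\;=\;\EE\big[\,\big|R_{emp}(H;S_m)-R_{true}(H)\big|\,\big]\;\ge\;t\,\delta. \]
This is precisely the mechanism announced in the paper: a failure of concentration for $L$ manifests itself as a non-vanishing generalization gap for $L^\star$, at the price of adjoining a bounded amount of information to $H$. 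Applying Theorem~\ref{main_theorem} to $\LL^\star$ then gives $t\,\delta\le R_{gen}(\LL^\star)=|R_{gen}(\LL^\star)|\le\info{Z_{trn}}{H^\star}$.

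It remains to bound $\info{Z_{trn}}{H^\star}$ by $I(S_m;H)$, which I would do in three moves. First, Pinsker's inequality applied to the joint law of $(Z_{trn},H^\star)$ against the product of its marginals gives $\info{Z_{trn}}{H^\star}=\Dis{\PP(Z_{trn})\,\PP(H^\star)}{\PP(Z_{trn},H^\star)}\le\sqrt{\tfrac12\,I(Z_{trn};H^\star)}$. Second, writing $Z_{trn}=Z_J$ with $J$ uniform on $\{1,\dots,m\}$ and independent of $(S_m,H^\star)$, the data-processing and averaging identities give $I(Z_{trn};H^\star)\le I\big((Z_J,J);H^\star\big)=\tfrac1m\sum_{i}I(Z_i;H^\star)$, and since the $Z_i$ are independent the chain rule yields $\sum_i I(Z_i;H^\star)\le I(S_m;H^\star)$; hence $I(Z_{trn};H^\star)\le\tfrac1m\,I(S_m;H^\star)$. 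Third, another application of the chain rule gives $I(S_m;H^\star)=I(S_m;H)+I(S_m;W\mid H)\le I(S_m;H)+H(W)\le I(S_m;H)+3$, since $W$ has a bounded range. Composing the three estimates gives $t\,\delta\le\sqrt{\tfrac{I(S_m;H)+3}{2m}}$, which is the claim after dividing by $t$.

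The only genuinely delicate point is this transition from the variational information $\info{Z_{trn}}{H^\star}$ — the quantity controlled by Theorem~\ref{main_theorem} — to the Shannon mutual information $I(S_m;H)$ appearing in the statement: it requires both Pinsker's inequality and the per-observation averaging estimate $I(Z_{trn};H^\star)\le\tfrac1m I(S_m;H^\star)$, and one must check that the extra variable $W$ contributes only its (bounded) entropy, which is where the additive constant comes from; choosing $W$ with a small range (at most three values) makes $H(W)\le\log 3\le 3$, comfortably absorbed by the stated constant. Everything else — verifying that $L^\star$ is a legitimate $[0,1]$-valued parametric loss with the correct Markov structure, and the elementary risk computation — reuses the bookkeeping already carried out for Theorem~\ref{gen_prob_theorem}, together with routine care about measurability when $\ZZ$ and $\HH$ are not discrete.
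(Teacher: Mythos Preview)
Your proposal is correct and follows essentially the same route as the paper's own proof: you augment $H$ with a ternary sign variable, build the reflected loss, invoke Theorem~\ref{main_theorem} on the augmented algorithm, and then pass from $\info{Z_{trn}}{H^\star}$ to $I(S_m;H)$ via Pinsker, the per-observation averaging inequality $I(Z_{trn};H^\star)\le m^{-1}I(S_m;H^\star)$, and the chain rule $I(S_m;H^\star)\le I(S_m;H)+\log|\mathcal{K}|$. The only cosmetic difference is that you take $W=\operatorname{sign}(R_{emp}-R_{true})$ whereas the paper uses the $t$-thresholded indicator of Theorem~\ref{gen_prob_theorem}; both are three-valued and both give $R_{gen}(\LL^\star)\ge t\,\delta$, so the arguments coincide.
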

Note that having a bounded mutual information, i.e. $I(S_m; H)=o(m)$, which is the setting recently considered in the work of \cite{control_bias_it_2016}, is sufficient for uniform generalization to hold.

\subsection{Implications}
\subsubsection{Concentration}
In \cite{alabdulmohsin_nips_2015}, it was shown that the notion of uniform generalization allows us to reason about learning algorithms in pure information-theoretic terms. This is because uniform generalization is equivalent to an information-theoretic algorithmic stability constraint on learning algorithms. For example, the data processing inequality implies that one can improve the uniform generalization risk by either post-processing the hypothesis, such as sparsification or decision tree pruning, or by pre-processing training examples, such as by introducing noise, Tikhonov regularization, or dropout. Needless to mention, both are common techniques in machine learning. In addition, uniform generalization recovers classical results in learning theory, such as the generalization bounds in the finite hypothesis space setting and finite domains \citep{alabdulmohsin_nips_2015}. However, such conclusions previously held only \emph{in expectation}. 

The most important implication of Theorem \ref{gen_prob_theorem} is to establish that such conclusions actually hold with a high probability as well. In addition, the concentration bound derived in Theorem \ref{gen_prob_theorem} shows that algorithmic stability $\mathbb{S}(\LL)$ not only controls the generalization risk of $\LL$ in expectation, i.e. due to its equivalence with uniform generalization, but it also controls the rate of convergence in probability. This brings us to the following important remark: 
\begin{remark}
By improving algorithmic stability, we improve both the expectation of the generalization risk \emph{and} its variance. 
\end{remark}

Besides, Theorem \ref{gen_prob_theorem} can be useful in deriving new concentrations bounds for important classes of learning algorithms once we recognize the existence of uniform generalization. We illustrate this technique on \emph{differential privacy} next.

\subsection{Differential Privacy}
Differential privacy addresses the goal of obtaining useful information about the sample $S_m$ as a whole without revealing a lot of information about each individual observation in the sample \citep{dwork2013algorithmic}. It closely resembles the notion of algorithmic stability proposed in \cite{alabdulmohsin_nips_2015} because a learning algorithm is stable according to the latter definition if and only if the posterior distribution of an individual observation $Z_{trn}$ in the sample $S_m$ becomes arbitrarily close, in the total variation distance, to the prior distribution $\PP(Z_{trn})$ as $m\to\infty$. Indeed, differential privacy is a stronger privacy guarantee. 

\begin{definition}[Dwork \& Roth, 2013]
A randomized learning algorithm $\LL:\,\cup_{m=1}^\infty\,\ZZ^m\to\HH$ is $(\epsilon,\delta)$ differentially private if for any $\mathcal{O}\subseteq \HH$ and any two samples $S$ and $S'$ that differ in one observation only, we have: 
\begin{equation*}
\PP(H\in \mathcal{O}\;|\;S) \le e^\epsilon\cdot \PP(H\in \mathcal{O}\;|\;S') + \delta
\end{equation*}
\end{definition}

Concentration bounds for differential privacy have been derived, such as the recent work of \cite{bassily2015adaptive}. Nevertheless, we remark here that Theorem \ref{gen_prob_theorem} can be used to derive a new concentration bound for differential privacy. Comparing our bound with the lower bound of Lemma 7.4 in \cite{bassily2015adaptive} reveals that the dependence on $\delta$ and $t$ is tight up to a constant factor.
\begin{corollary}\label{diff_privacy_lemma}
If a learning algorithm $\LL:\,\cup_{m=1}^\infty\,\ZZ^m\to\HH$ is $(\epsilon,\delta)$ differentially private, then: $\PP\Big\{\big|R_{emp}(H; S_m) - R_{true}(H)\big| \ge t\Big\}\le\frac{5}{4t}\Big[e^\epsilon-1+\delta+\sqrt{\frac{2\log 9}{25m}}\Big]$.
\end{corollary}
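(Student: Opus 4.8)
The plan is to obtain Corollary~\ref{diff_privacy_lemma} by feeding a differential-privacy bound on the single-example variational information $\info{Z_{trn}}{H}$ into Theorem~\ref{gen_prob_theorem}. That theorem already gives $\PP\{|R_{emp}(H;S_m)-R_{true}(H)|\ge t\}\le\frac{5}{2t}\big[\info{Z_{trn}}{H}+\sqrt{\log 9/(25m)}\big]$ for every distribution $\PP(z)$, so the only genuinely new ingredient is a lemma of the form: if $\LL$ is $(\epsilon,\delta)$-differentially private, then $\info{Z_{trn}}{H}\le e^\epsilon-1+\delta$, and hence the worst-case stability deficit satisfies $1-\mathbb{S}(\LL)\le e^\epsilon-1+\delta$; everything else is constant-chasing.

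To prove that lemma I would start from the identity $\info{Z_{trn}}{H}=\Dis{\PP(Z_{trn})\,\PP(H)}{\PP(Z_{trn},H)}$, which follows from Definition~\ref{mutual_stab_def}. Next I would set up a replace-one coupling: draw $S_m=(Z_1,\dots,Z_m)$ i.i.d.\ from $\PP(z)$, a uniform index $I$, and an independent fresh copy $Z_I'\sim\PP(z)$, and put $H=\LL(S_m)$ and $H'=\LL(S_m^{(I)})$, where $S_m^{(I)}$ replaces $Z_I$ by $Z_I'$. Since $H'$ is a function of variables independent of $Z_I$ and has the same marginal law as $H$ (the i.i.d.\ assumption), the law of the pair $(Z_I,H')$ is exactly $\PP(Z_{trn})\,\PP(H)$. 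Because $(Z_I,H)$ and $(Z_I,H')$ share the same $Z_I$-marginal, convexity of the total variation distance under averaging over the remaining observations and over $I$ yields $\Dis{\PP(Z_{trn})\,\PP(H)}{\PP(Z_{trn},H)}\le\EE\big[\Dis{\PP(H\mid S_m)}{\PP(H\mid S_m^{(I)})}\big]$, where $S_m$ and $S_m^{(I)}$ are neighbouring samples. Finally, unfolding the $(\epsilon,\delta)$-DP definition on the worst event $\mathcal{O}$ gives $\PP(H\in\mathcal{O}\mid S)-\PP(H\in\mathcal{O}\mid S')\le(e^\epsilon-1)\,\PP(H\in\mathcal{O}\mid S')+\delta\le e^\epsilon-1+\delta$, hence $\Dis{\PP(H\mid S)}{\PP(H\mid S')}\le e^\epsilon-1+\delta$ for neighbouring $S,S'$; combining and taking the infimum over $\PP(z)$ finishes the lemma.

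It then remains to substitute and recover the displayed constants $5/4$ and $\sqrt{2\log 9/(25m)}$. For this I would re-run (rather than merely quote) the argument behind Theorem~\ref{gen_prob_theorem}: from a parametric loss witnessing a deviation $|R_{emp}-R_{true}|\ge t$, build a small-range side variable $K$ recording the sign and rough magnitude of $R_{emp}-R_{true}$, so that the composite loss attached to $(H,K)$ has $R_{gen}(\LL)$ at least a fixed multiple of $t\cdot\PP\{|R_{emp}-R_{true}|\ge t\}$; bound $R_{gen}(\LL)\le\info{Z_{trn}}{(H,K)}$ by Theorem~\ref{main_theorem}; and control $\info{Z_{trn}}{(H,K)}$ with Theorem~\ref{robustness_theorem}. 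Plugging $\info{Z_{trn}}{H}\le e^\epsilon-1+\delta$ into the $(1+|\mathcal{K}|/2)$-term and carrying the $\sqrt{\log|\mathcal{K}|/(2m)}$-term through (with $|\mathcal{K}|$ as chosen in the proof of Theorem~\ref{gen_prob_theorem}) gives the stated inequality after arithmetic simplification.

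The main obstacle I anticipate is the DP-to-total-variation conversion: turning the $(\epsilon,\delta)$-guarantee — an event-wise multiplicative-plus-additive statement — into a single total-variation number without inflating the additive $\delta$ by a factor of $e^\epsilon$, and checking that passing to the mixed law induced by the uniformly random index $Z_{trn}\sim S_m$ (rather than a single fixed coordinate) costs nothing extra. Pinning down the exact constants in front of $e^\epsilon-1+\delta$ and inside the square root — in particular why the prefactor sharpens to $5/4$ rather than $5/2$ — is the remaining fiddly point, but it is pure bookkeeping.
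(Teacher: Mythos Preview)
Your overall strategy coincides with the paper's: first show that $(\epsilon,\delta)$-differential privacy forces $\info{Z_{trn}}{H}$ to be small by reducing (via a replace-one/convexity argument) to the total-variation distance between $\PP(H\mid S)$ and $\PP(H\mid S')$ for adjacent samples, and then plug this into Theorem~\ref{gen_prob_theorem}. The coupling you describe is exactly the marginalization-plus-convexity step the paper carries out.

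The one place your proposal falls short is the constant, and it is not ``pure bookkeeping.'' Using the event-wise characterization $\Dis{P}{Q}=\sup_{\mathcal{O}}|P(\mathcal{O})-Q(\mathcal{O})|$ you obtain $\Dis{\PP(H\mid S)}{\PP(H\mid S')}\le e^\epsilon-1+\delta$, and hence $\info{Z_{trn}}{H}\le e^\epsilon-1+\delta$. Fed into Theorem~\ref{gen_prob_theorem} this yields only the prefactor $5/(2t)$, not $5/(4t)$. The paper instead works with the $\ell_1$ form $\Dis{P}{Q}=\tfrac12\|P-Q\|_1$ and derives the \emph{pointwise} bound $|\PP(H\mid S)-\PP(H\mid S')|\le (e^\epsilon-1)\PP(H\mid S')+\delta$ (their Eq.~(\ref{diff_privacy_eq_bound})); summing over $H$ and applying the factor $\tfrac12$ gives $\info{Z_{trn}}{H}\le (e^\epsilon-1+\delta)/2$ (this is exactly Proposition~\ref{diff_privacy_lemma}). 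That extra $\tfrac12$ is precisely what sharpens $5/(2t)$ to $5/(4t)$, so to match the stated corollary you must switch from the sup-over-events to the pointwise/$\ell_1$ formulation before bounding. Your plan to ``re-run'' the proof of Theorem~\ref{gen_prob_theorem} is unnecessary: once the sharper bound on $\info{Z_{trn}}{H}$ is in hand, direct substitution into Theorem~\ref{gen_prob_theorem} gives the corollary.
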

Not surprisingly, the differential privacy parameters $(\epsilon,\delta)$ control the generalization risk of differential privacy, with the quantity $(e^\epsilon-1+\delta)$ acting a role that is analogous to the standard error. 
%\begin{proof}
%It can be shown that a $(\epsilon,\delta)$ differentially private learning algorithm generalizes uniformly at the rate $\frac{1}{2} (e^\epsilon-1+\delta)$ (see Appendix \ref{appendix_uniform_gen_vs_other_notions} for a detailed proof). Then, the desired bound follows by combining the last inequality with Theorem \ref{gen_prob_theorem}. 
%\end{proof}

\subsubsection{Equivalnce with Robust Generalization}
Another implication of the concentration bound in Theorem \ref{gen_prob_theorem} is establishing the \emph{equivalence} between the notion of uniform generalization and the notion of \emph{robust generalization} studied in \cite{robust_gen_2016}. 

\begin{definition}[Robust Generalization]
A learning algorithm $\mathcal{L}$ is $(\epsilon,\delta)$ robustly generalizing if for all distribution $\PP(z)$ on $\mathcal{Z}$ and any binary-valued parametric loss function  $L(\cdot; H):\mathcal{Z}\to\{0,1\}$ that satisfies the Markov chain $S_m\to H\to L(\cdot; H)$, we have with a probability of at least $1-\zeta$ over the choice of $S$ that: 
\begin{equation*}
\PP\Big\{\big|\EE_{Z\sim \PP(z)} L(Z; H) - \frac{1}{m} \sum_{Z_i\in S} L(Z_i; H)\big| \le \epsilon\Big\}\ge 1-\gamma,
\end{equation*} 
for some $\gamma,\zeta$ such that $\delta=\gamma+\zeta$\footnote{The original definition proposed in \cite{robust_gen_2016} states that the probability is evaluted over any \lq\lq adversary" that takes the hypothesis $H$ as input to produce a loss function $L(\cdot; H)$. However, this is equivalent to the Markov chain $S_m\to H\to L(\cdot;H)$.}. 
\end{definition}

In the following theorem, we prove that robust generalization is equivalent to uniform generalization.
\begin{corollary}\label{robust_gen_equiv_uniform}
If a learning algorithm $\mathcal{L}$ is $(\epsilon,\delta)$ robustly generalizing, then it generalizes uniformly at the rate $\epsilon+\delta$. Conversely, if a learning algorithm generalizes uniformly with rate $\tau$, then it is $(\epsilon, \gamma)$ robustly generalizing with $\gamma = (5/2)(\tau + \sqrt{\log 9/(25m)})/\epsilon$. Moreover, if $\mathbb{S}(\LL)\to 0$ as $m\to \infty$, then both $\gamma$ and $\epsilon$ can be made arbitrarily close to zero using a sufficiently large sample size $m$.
\end{corollary}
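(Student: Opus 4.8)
I would prove the two directions of the equivalence separately, reducing each to a result already established above.

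\textbf{Robust generalization implies uniform generalization.} The only gap between the notions is that robust generalization controls $\{0,1\}$-valued losses while uniform generalization quantifies over all $[0,1]$-valued ones, so I would begin with a layer-cake reduction. Fix an arbitrary valid parametric loss $L(\cdot;H):\ZZ\to[0,1]$ and, for $\theta\in[0,1)$, put $L_\theta(z;H)=\mathbb{I}\{L(z;H)>\theta\}$. Each $L_\theta$ is $\{0,1\}$-valued and is a deterministic function of $L(\cdot;H)$, so it is still conditionally independent of $S_m$ given $H$, i.e.\ it satisfies $S_m\to H\to L_\theta(\cdot;H)$, and robust generalization applies to it. Since $L(z;H)=\int_0^1 L_\theta(z;H)\,d\theta$, both the empirical and the true risks of $L$ are the $\theta$-integrals of those of $L_\theta$, so by Fubini $R_{gen}(\LL;L)=\int_0^1 R_{gen}(\LL;L_\theta)\,d\theta$. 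For each binary $L_\theta$ I would collapse the two failure layers in the definition by a union bound, obtaining that the event $\{|R_{emp}(H;S_m)-R_{true}(H)|>\epsilon\}$ has probability at most $\gamma+\zeta=\delta$ over $(S_m,H)$; since the loss lies in $[0,1]$ this gives $|R_{gen}(\LL;L_\theta)|\le\epsilon+\delta$. Integrating over $\theta$ yields $|R_{gen}(\LL;L)|\le\epsilon+\delta$ for every valid loss and every distribution, which is precisely uniform generalization at rate $\epsilon+\delta$ in the sense of Definition \ref{def::uniform_generalization_rate}.

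\textbf{Uniform generalization implies robust generalization.} For the converse I would feed the stability estimate into the concentration bound already proved. If $\LL$ generalizes uniformly at rate $\tau$ then $\sup_{\PP,\,L}|R_{gen}(\LL;L)|\le\tau$, and by Theorem \ref{main_theorem} (the equivalence of uniform generalization with algorithmic stability, whose worst-case parametric loss attains $\info{Z_{trn}}{H}$) this forces $1-\mathbb{S}(\LL)\le\tau$ and hence $\info{Z_{trn}}{H}\le\tau$ under every distribution. Applying Theorem \ref{gen_prob_theorem} with $t=\epsilon$ then gives, for every valid binary loss and every $\PP(z)$,
\[
\PP\Big\{\big|R_{emp}(H;S_m)-R_{true}(H)\big|\ge\epsilon\Big\}\;\le\;\frac{5}{2\epsilon}\Big[\tau+\sqrt{\frac{\log 9}{25 m}}\Big]\;=\;\gamma ,
\]
which is exactly the $(\epsilon,\gamma)$ robust-generalization guarantee; the two-parameter split $\gamma=\gamma'+\zeta$ demanded by the definition is recovered from this single tail bound, for instance by applying Markov's inequality to the conditional probability of the failure event given $S_m$. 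For the final assertion, note that generalizing uniformly means $\tau=\tau(m)\to0$ while $\sqrt{\log 9/(25m)}\to0$; writing $a_m=\tau(m)+\sqrt{\log 9/(25m)}$ and choosing $\epsilon=\epsilon(m)=\sqrt{a_m}$ makes both $\epsilon(m)\to0$ and $\gamma(m)=\tfrac52\sqrt{a_m}\to0$.

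\textbf{Main obstacle.} The delicate direction is the first one: I would need to verify carefully that the threshold losses $L_\theta$ genuinely inherit the required Markov chain and that Fubini applies, and --- more conceptually --- that controlling $R_{gen}$ over binary losses alone suffices to conclude uniform generalization over all $[0,1]$-valued losses, which is exactly what the layer-cake identity secures. The converse direction is essentially bookkeeping on top of Theorems \ref{main_theorem} and \ref{gen_prob_theorem}; the only fussy point there is matching the constant $\tfrac52$ to the precise two-parameter $(\gamma,\zeta)$ formulation of robust generalization.
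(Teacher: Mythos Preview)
Your argument is correct, and the converse direction is essentially identical to the paper's (both reduce to Theorem~\ref{gen_prob_theorem} after observing that uniform generalization at rate $\tau$ forces $\info{Z_{trn}}{H}\le\tau$, and both are equally informal about splitting the single tail bound into the $(\gamma,\zeta)$ pair demanded by the definition). The forward direction, however, is handled differently. The paper does not use the layer-cake identity; instead it invokes the result from \cite{alabdulmohsin_nips_2015} that the supremum of $|R_{gen}(\LL)|$ over \emph{all} bounded parametric losses is attained (arbitrarily closely) by the single explicit binary loss $L^\star(z;H)=\mathbb{I}\{\PP(Z_{trn}=z\,|\,H)\ge\PP(Z_{trn}=z)\}$. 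Robust generalization applied to this one loss immediately gives $\info{Z_{trn}}{H}\le\epsilon+\delta$, and Theorem~\ref{main_theorem} finishes. Your route is more self-contained --- it does not need to know the explicit maximizer and would work in any setting where the relevant losses decompose as superpositions of indicators --- whereas the paper's route is shorter and yields the sharper intermediate conclusion $\info{Z_{trn}}{H}\le\epsilon+\delta$ directly, rather than only the bound on $|R_{gen}|$ for each fixed loss.
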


%\subsubsection{Expected Model Change} 
%One common tool in deriving generalization bounds in expectation is a replace-one averaging lemma, similar to the Luntz-Brailovsky theorem \citep{luntz1969estimation,vapnik2000bounds}, which relates generalization to algorithmic stability \cite{sgd_train_fast_2015,SSS2014understandML}. The proof of the following lemma can be found in \cite{SSS2014understandML}.
%\begin{lemma}
%a
%\end{lemma}
\subsection{Tightness Result}\label{sect::tightness}
Finally,we note that the concentration bound has a linear dependence on the algorithmic stability term $1-\mathbb{S}(\LL)$ or, in a distribution-dependent manner, on the variational information $\info{Z_{trn}}{H}$. Typically, $\info{Z_{trn}}{H}= \Theta(1/\sqrt{m})$. By contrast, the VC bound provides an exponential decay for supervised classification tasks \citep{vapnik1999overview,shalev2010learnability}. This raises the question of whether or not the concentration bound in Theorem \ref{gen_prob_theorem} can be improved. In this section, we prove that the bound is actually tight. The following theorem is inspired by the work of \citep{bassily2015adaptive} (Lemma 7.4) and \cite{shalev2010learnability} (Example 3), who established similar results for differential privacy and learnability respectively. 

\begin{theorem}\label{tight_theorem}
For any rational $0<t<1$, there exists a learning algorithm $\LL:\,\cup_{m=1}^\infty\,\ZZ^m\to\HH$, a sample size $m$, a distribution $\PP(z)$,  and a parametric loss $L(\cdot; H):\ZZ\to[0,1]$ such that $\LL$ generalizes uniformly in expectation and it satisfies: 
\begin{equation*}
\PP\Big\{\big|R_{emp}(H; S_m) - R_{true}(H)\big| = t \Big\} = \frac{\info{Z_{trn}}{H}}{t}
\end{equation*}
\end{theorem}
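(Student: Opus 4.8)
The plan is to construct a concrete example matching the equality case, combining the differential-privacy lower-bound construction of \cite{bassily2015adaptive} with the learnability construction of \cite{shalev2010learnability}. Write $t = a/b$ for positive integers $a < b$. First I would take the domain $\ZZ = \{0,1\}$ (or a suitable finite set), fix the sample size $m$, and let $\PP(z)$ be a carefully chosen distribution — likely a Bernoulli parameter tuned so that the empirical mean of the sample can differ from the true mean by exactly $t$ with a controlled probability. The hypothesis space $\HH$ and the learning algorithm $\LL$ would be designed so that $H$ encodes just enough about $S_m$ for the parametric loss $L(\cdot;H)$ to ``know'' on which side the empirical risk sits: concretely, $\LL$ outputs a hypothesis that records a single bit (or a bounded-cardinality label) of information about the sample, namely an indicator of the event that $R_{emp}$ exceeds $R_{true}$ by the target amount. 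The loss $L(\cdot;H)$ is then defined to be the function whose empirical-vs-true gap is exactly $t$ on that event and $0$ otherwise.

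The second step is the bookkeeping: compute both sides of the claimed identity for this construction. On the left, $\PP\{|R_{emp}(H;S_m) - R_{true}(H)| = t\}$ is just the probability $q$ of the ``bad'' event, which is a binomial tail probability that I would arrange (by choice of $m$ and $\PP(z)$, exploiting that $t$ is rational) to take any prescribed value in a convenient range. On the right, $\info{Z_{trn}}{H}$ must be computed directly from Definition \ref{mutual_stab_def}: since $H$ is essentially a function of the event indicator and $Z_{trn}$ is a uniformly random coordinate of $S_m$, the joint law $\PP(Z_{trn}, H)$ versus the product $\PP(Z_{trn})\PP(H)$ differ only through how much the value of one coordinate shifts the conditional probability of the bad event. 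The total-variation computation should collapse to something proportional to $q$ — and the constant of proportionality is forced to be exactly $t$ by the way the loss was rescaled, because the expected generalization risk $R_{gen}(\LL)$ equals $t \cdot q$ by construction while Theorem \ref{main_theorem} ties $R_{gen}$ to $\info{Z_{trn}}{H}$; pushing the inequality $|R_{gen}(\LL)| \le \info{Z_{trn}}{H}$ to an equality in this symmetric example gives $\info{Z_{trn}}{H} = t\,q$, i.e. $q = \info{Z_{trn}}{H}/t$, which is precisely the asserted identity.

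Finally I would check the ``generalizes uniformly in expectation'' requirement: since $H$ carries only a bounded (finite, in fact constant) number of bits about $S_m$, we have $\info{Z_{trn}}{H} = O(1/\sqrt m) \to 0$, so by Theorem \ref{main_theorem} the algorithm is algorithmically stable and hence generalizes uniformly — and by the robustness Theorem \ref{robustness_theorem} this is not disturbed by the extra bit we injected.

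The main obstacle I anticipate is the exact variational-information computation: I need the total-variation distance $\Dis{\PP(Z_{trn})\PP(H)}{\PP(Z_{trn},H)}$ to come out to \emph{exactly} $t\,q$ rather than merely $\Theta(q)$, which requires the construction to be tight in the worst-case direction of Theorem \ref{main_theorem}'s inequality $|R_{gen}| \le \info{}{}$. Getting a genuine equality (not just an asymptotic or up-to-constants match) is the delicate point: it forces both the distribution $\PP(z)$ and the loss $L(\cdot;H)$ to be chosen so that every inequality used in deriving the bound in Theorem \ref{main_theorem} is simultaneously saturated. The rationality of $t$ is what makes the discrete binomial arithmetic work out cleanly, so I would lean on that hypothesis heavily when picking $m$ and the Bernoulli parameter.
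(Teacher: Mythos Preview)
Your plan has a structural gap that prevents it from reaching an exact identity. You propose that $H$ be a single indicator bit recording whether some empirical quantity exceeds its true value by $t$, and that the loss $L(\cdot;H)$ be chosen so its empirical-vs-true gap equals exactly $t$ on that event. But the loss must satisfy the Markov chain $S_m\to H\to L(\cdot;H)$: once $H$ is only one bit, $L(\cdot;H)$ is one of two fixed functions on $\ZZ$, and the empirical risk of a fixed function depends on the actual sample $S_m$, not just on the bit $H$. So you cannot force $|R_{emp}-R_{true}|$ to be \emph{exactly} $t$ conditional on $H=1$; you only get a distribution of gaps. There is also a circularity in your description --- $H$ is defined via $R_{emp}$ and $R_{true}$, which are defined via $L(\cdot;H)$, which is defined via $H$ --- that would need to be broken by fixing an auxiliary function first, and then the exact-$t$ control is lost. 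You correctly flag the $\info{Z_{trn}}{H}=t\,q$ equality as the delicate point, but with a one-bit $H$ on $\ZZ=\{0,1\}$ there is no free parameter to force the Theorem~\ref{main_theorem} inequality to saturate.

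The paper's construction sidesteps all of this by letting $H$ carry much more than one bit. Take $\ZZ=[0,1]$ with a continuous (atomless) law, pick $m$ so that $k=tm$ is an integer, and set $H$ to be a uniformly random $k$-subset of $S_m$ with probability $\delta$ and $H=\{\}$ otherwise. The loss is $L(Z;H)=\mathbb{I}\{Z\in H\}$. Then when $H\neq\{\}$ the empirical risk is exactly $k/m=t$ while the true risk is exactly $0$ (atomlessness), so the gap is $t$ with probability $\delta$ and $0$ otherwise. A direct computation (the paper's Lemma~\ref{tight_lemma}) gives $\info{Z_{trn}}{H}=t\delta$ on the nose, so $\delta=\info{Z_{trn}}{H}/t$. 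The rationality of $t$ is used only to make $tm$ an integer, not for any binomial arithmetic. The moral is that to saturate the bound you must let $H$ encode actual sample points, not merely a summary bit about them.
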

%\begin{proof}
%The proof is given in Appendix \ref{theorem_4_proof}.
%\end{proof} 
Theorem \ref{tight_theorem} shows that, without making any additional assumptions beyond that of uniform generalization, the concentration bound in Theorem \ref{gen_prob_theorem} is tight  up to constant factors. Essentially, the only difference between the upper and the lower bounds is a vanishing $O(1/\sqrt{m})$ term that is \emph{independent} of $\LL$.

%prsent a proof to the tightness of the differential privacy bound. 
%then, mention the boosting the confidence procedure. 

\section{Conclusions}
Uniform generalization in expectation is a notion of generalization that is equivalent to an information-theoretic algorithmic stability constraint on learning algorithms. In this paper, we proved that whereas generalization in expectation does not imply a generalization in probability, a uniform generalization in expectation implies a generalization in probability and we derived a tight concentration bound. The bound reveals that algorithmic stability improves both the expectation of the generalization risk and its variance. Hence, by constraining the \lq\lq amount" of  information that a hypothesis can carry about any \emph{individual} training example or, equivalently, by limiting the \lq\lq size" of the contribution of any individual training example on the final hypothesis, the learning algorithm is guaranteed to generalize well with a high probability. Furthermore, we proved a chain rule for variational information, which revealed that uniform generalization is a robust property of learning algorithms. Finally, we proved that the concentration bound is tight. 

\appendix 
\section{Relations to Other Notions of Generalization \& Stability}\label{appendix_uniform_gen_vs_other_notions}
The connection between differential privacy and uniform generalization is summarized as follows.
\begin{prop}\label{diff_privacy_lemma}
Let $\mathcal{L}$ be a ($\epsilon, \delta$)-differentially private learning algorithm. Let $Z_{trn}\sim S$ be a single training example and let $H\sim \PP_\mathcal{L}(h|S)$ be the hypothesis produced by $\mathcal{L}$. Then $\info{Z_{trn}}{H} \le \frac{e^\epsilon-1+\delta}{2}$.
\end{prop}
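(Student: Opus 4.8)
The plan is to unpack the definition of variational information, reduce to comparing the law of the hypothesis conditioned on a single observation with its unconditional law, and then feed in the differential privacy guarantee on neighbouring samples. First I would record the identity $\info{Z_{trn}}{H}=\EE_{z\sim\PP(z)}\big[\,\Dis{\PP(H\,|\,Z_{trn}=z)}{\PP(H)}\,\big]$, which is immediate from writing the total variation distance between the joint $\PP(Z_{trn},H)$ and the product $\PP(Z_{trn})\PP(H)$ as $\tfrac12\sum_z \PP(z)\sum_h \big|\PP(h\,|\,z)-\PP(h)\big|$ (with the obvious continuous analogue). Hence it suffices to show, for each fixed $z$, that $\Dis{\PP(H\,|\,Z_{trn}=z)}{\PP(H)}\le (e^\epsilon-1+\delta)/2$, and the result follows by averaging over $z$.

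Next I would realise both distributions as averages of the algorithm's output law over neighbouring inputs. Conditioning on $Z_{trn}=z$ amounts to pinning one (uniformly chosen) coordinate of $S_m$ to $z$ while the other $m-1$ coordinates remain i.i.d.\ $\sim\PP(z)$; the marginal $\PP(H)$ arises from the same construction after replacing the pinned coordinate by an independent fresh draw $Z'\sim\PP(z)$. The pinned sample and the freshly-drawn sample differ in exactly one observation, so they are neighbours and $(\epsilon,\delta)$-differential privacy applies to each such pair. Averaging the privacy inequality over the other coordinates, over $Z'$, and over the pinned index gives, for every measurable $\mathcal{O}\subseteq\HH$, the ``group'' bound $\PP(H\in\mathcal{O}\,|\,Z_{trn}=z)\le e^\epsilon\,\PP(H\in\mathcal{O})+\delta$, together with the symmetric inequality obtained by swapping the two distributions. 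When $\delta=0$ this says that the density ratio $d\PP(H\,|\,Z_{trn}=z)/d\PP(H)$ lies in $[e^{-\epsilon},e^\epsilon]$ almost everywhere, so $\Dis{\PP(H\,|\,Z_{trn}=z)}{\PP(H)}=\tfrac12\,\EE_{h\sim\PP(H)}\big|\tfrac{d\PP(H|z)}{d\PP(H)}-1\big|\le\tfrac12(e^\epsilon-1)$, which already yields the pure-privacy case.

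For general $\delta>0$ the density ratio need not be bounded pointwise, so the last step is to convert the two ``group'' inequalities into a total variation bound: choosing $\mathcal{O}^\star$ to be the set on which $\PP(H\,|\,Z_{trn}=z)$ exceeds $\PP(H)$, so that the total variation distance equals $\PP(H\in\mathcal{O}^\star\,|\,Z_{trn}=z)-\PP(H\in\mathcal{O}^\star)$, one applies the forward inequality on $\mathcal{O}^\star$, the reverse inequality on its complement, and balances the two estimates. Extracting the factor $\tfrac12$ from the two-sided indistinguishability — rather than the trivial $e^\epsilon-1+\delta$ — while correctly charging the additive $\delta$ is the one delicate point of the argument; the remaining manipulations are routine bookkeeping.
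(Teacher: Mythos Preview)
Your opening identity and the pure $\delta=0$ case are fine, but the route diverges from the paper's at the second step, and the final ``delicate'' step does not go through as stated.

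The paper does \emph{not} average the differential-privacy inequality first and then try to bound $\Dis{\PP(H\,|\,Z_{trn}=z)}{\PP(H)}$ from the resulting group bound. Instead it uses convexity of the total-variation norm twice: writing $\PP(H)=\EE_{z'}\,\PP(H\,|\,Z_{trn}=z')$ and then $\PP(H\,|\,Z_{trn}=z)=\EE_{S_{m-1}}\,\PP(H\,|\,z,S_{m-1})$, Jensen's inequality pulls the norm inside both expectations to reach
\[
\info{Z_{trn}}{H}\;\le\;\tfrac12\,\EE_{S,S'}\,\big\|\PP(H\,|\,S)-\PP(H\,|\,S')\big\|_1,
\]
where $S,S'$ are genuinely adjacent samples. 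Only then is privacy invoked, and pointwise: from $\PP(h\,|\,S)\le e^\epsilon\PP(h\,|\,S')+\delta$ together with its reverse the paper extracts $|\PP(h\,|\,S)-\PP(h\,|\,S')|\le (e^\epsilon-1)\,\PP(h\,|\,S')+\delta$ and sums over $h$.

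Your plan instead passes to the averaged inequalities $\PP(H\in\mathcal{O}\,|\,z)\le e^\epsilon\,\PP(H\in\mathcal{O})+\delta$ and the symmetric one, and then hopes to recover the factor $\tfrac12$ in front of $\delta$ by balancing on $\mathcal{O}^\star$ and its complement. This cannot succeed from those hypotheses alone. Set $\epsilon=0$: the two group bounds then say exactly $\Dis{\PP(H\,|\,z)}{\PP(H)}\le\delta$, and any pair $P,Q$ at total-variation distance $\delta$ satisfies them with equality, so no improvement to $\delta/2$ is possible. Carrying out your balancing for general $\epsilon$ (forward on $\mathcal{O}^\star$, reverse on its complement, add) gives $\Dis{P}{Q}\le (e^\epsilon-1+2\delta)/(e^\epsilon+1)$, which is never better than $(e^\epsilon-1)/2+\delta$ and misses the target constant. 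The structural idea you are missing is the paper's Jensen step: reduce to a comparison of the raw outputs $\PP(H\,|\,S)$ and $\PP(H\,|\,S')$ on adjacent inputs \emph{before} invoking privacy, rather than averaging the privacy inequality first and working with the weakened event-level bound.
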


Next, it can be shown that perfect generalization implies differential privacy \citep{robust_gen_2016} so it implies uniform generalization. Also, sample compression implies robust generalization \citep{robust_gen_2016}, which in turn implies a uniform generalization. Moreover, typical stability \citep{typicality_based_stability} is equivalent to perfect generalization when the observations are drawn i.i.d., so it implies uniform generalization as well.

The proof that a bounded mutual information, i.e. having $I(S_m; H) = o(m)$, implies uniform generalization is a direct consequence of the concentration bound in Proposition \ref{prop::mutual_info_bound}. 

Finally, the proofs that a finite hypothesis space, a finite VC dimension in the induced concept class, and a finite domain are each sufficient for uniform generalization to hold are provided in \cite{alabdulmohsin_nips_2015}. 

\section{Uniform Generalization and Learnability}\label{appendix_learnability}
\subsection{Consistency of Empirical Risk Minimization} 
Uniform generalization is a sufficient condition for the consistency of empirical risk minimization (ERM). Suppose we have an ERM learning algorithm, whose hypothesis is denoted $H_{ERM}$. We have by definition:
\begin{align*}
R_{emp}(\LL) &= \EE_S\EE_{H|S} [R_{emp}(H; S)] = \EE_S [\min_{h\in\mathcal{H}} R_{emp}(h; S)]\\
&\le \min_{h\in\mathcal{H}} \big[\EE_S R_{emp}(h; S)\big] = \min_{h\in\mathcal{H}} R_{true}(h)\\ 
&= R_{true}(h^\star),
\end{align*}
where $h^\star$ is the optimal hypothesis. However, the true risk of $\LL$ satisfies: 
\begin{align*}
R_{true}(\LL) - R_{true}(h^\star) \le R_{true}(\LL) - R_{emp}(\LL)
\end{align*}
Thus, algorithmic stability of ERM implies consistency because $\info{Z_{trn}}{H_{ERM}}$ provides an upper bound on $|R_{true}(\LL)-R_{emp}(\LL)\big|$. In fact, because $R_{true}(H_{ERM})-R_{true}(h^\star)\ge 0$, we have by the Markov inequality: 
\begin{equation*}
\PP\Big\{R_{true}(H_{ERM})-R_{true}(h^\star) \ge t\Big\} \le \frac{\info{Z_{trn}}{H_{ERM}}}{t}
\end{equation*}
\subsection{Sample Compression and Learnability}
Moreover, recent results on the connection between sample compression schemes and learnability \citep{supervised_compression} reveal that any learnable hypothesis space is learnable using an algorithm that generalizes uniformly in expectation, with only a logarithmic increase in the sample complexity. Because sample compression schemes satisfy robust generalization \citep{robust_gen_2016}, they generalize uniformly in expectation. 

\section{Implications of the Chain Rule} 
The chain rule in Theorem \ref{chain_rule_theorem} along with the fact that for any random variables $X,Y,Z$, we have $\info{X}{Y}\le \info{X}{Y,Z}$ can both be used to derive many results. We illustrate this with two examples here. First, suppose we have the Markov chain $A\to B \to C$. By the chain rule, we have: 
\begin{equation*}
\info{A}{C} \le \info{A}{(B,C)} \le \info{A}{B} + \infoc{A}{C}{B}
\end{equation*} 
Since $A\to B\to C$ implies $\infoc{A}{C}{B}=0$ by the Markov property, we conclude that $\info{A}{C}\le \info{A}{B}$. This is the \emph{data processing} inequality. In general, we have the triangle-like inequality: 
\begin{equation}\label{triangle_inequality}
\info{X}{Y} \le \info{X}{Z} + \infoc{X}{Y}{Z}
\end{equation}
Second, if $A\to B\to C$, we have: 
\begin{equation*}
\info{A}{B} \le  \info{A}{(B,C)} \le \info{A}{B} + \infoc{A}{C}{B}
\end{equation*}
Since $\infoc{A}{C}{B}=0$, we conclude that $A\to B\to C$ implies $\info{A}{(B,C)} = \info{A}{B}$. Both results were used previously, and were proved using different methods in \cite{alabdulmohsin_nips_2015}.

\section{Proof of Proposition 1}
Let $\mathcal{Z}=[0,1]$ be an instance space with a continuous marginal density $\PP(z)$ (hence, has no atoms) and let $\mathcal{Y}=\{-1, +1\}$ be the target set. Let $h^\star:\mathcal{Z}\to\{-1, +1\}$ be some \emph{fixed} predictor, such that $\PP\{h^\star(Z)=1\} = \frac{1}{2}$, where the probability is evaluated over the random choice of $Z\in\mathcal{Z}$. In other words, the marginal distribution of the labels predicted by $h^\star(\cdot)$ is uniform\footnote{These assumptions are satisfied, for example, if $\PP(z)$ is uniform in $[0,1]$ and $h^\star(Z)=\mathbb{I}\{Z<\frac{1}{2}\}$.}. 

Next, let the hypothesis space $\HH$ be the set of predictors from $\mathcal{Z}$ to $\{-1, +1\}$ that output a label in $\{-1,+1\}$ uniformly at random everywhere in $\mathcal{Z}$ except at a finite number of points. Therefore, the hypothesis $H:\mathcal{Z}\to \{-1, +1\}$ selected by the learning algorithm is a predictor. Define the parametric loss by $L(Z; H) = \mathbb{I}\big\{H(Z)\neq h^\star(Z)\big\}$. 

Next, we construct a learning algorithm $\LL$ that generalizes perfectly in expectation but it does not generalize in probability. The learning algorithm $\LL$ simply picks one of $H_{S_m}^0(\cdot)$ or $H_{S_m}^1(\cdot)$ with equal probability, where: 
\begin{align*}
H_{S_m}^0(Z) &=\begin{cases} - h^\star(Z) & \text{ if } Z\in S_m \\ \text{Uniform}(-1, +1) & \text{ if } Z \notin S_m \end{cases}\\
H_{S_m}^1(Z) &=\begin{cases}  h^\star(Z) & \text{ if } Z\in S_m \\ \text{Uniform}(-1, +1) & \text{ if } Z \notin S_m \end{cases}
\end{align*}
Because $\ZZ$ is uncountable, where the probability of seeing the same observation $Z$ twice is zero, $R_{true}(H)=\frac{1}{2}$ for this learning algorithm. Thus: 
\begin{equation*}
R_{gen}(\LL) = \EE_{S_m,H} \big[R_{emp}(H; S_m) - R_{true}(H)\big] = 0
\end{equation*}
However, the empirical risk for any $S_m$ satisfies $R_{emp}(H; S_m)\in\{0, 1\}$ while the true risk always satisfies $R_{true}(H)=\frac{1}{2}$, as mentioned earlier. Hence, the statement of the proposition follows. 

Finally, we prove that the algorithm does not generalize uniformly in expectation. There are, at least, two ways of showing this. The first approach is to use the equivalence between uniform generalization and algorithmic stability as stated in Theorem 1. Given the hypothesis $H\in\{H_{S_m}^0,\;H_{S_m}^1\}$ learned by the algorithm constructed here, the marginal distribution of an individual training example $\PP(Z_{trn}|H)$ is uniform over the sample $S_m$. This follows from the fact that the hypothesis $H$ has to encode the entire sample $S_m$. However, the probability of seeing the same observation twice is zero (by construction). Hence, $\Dis{\PP(Z_{trn})}{\PP(Z_{trn}|H)}=1$ for all $H$. This shows that $\mathbb{S}(\LL)=0$ for all $m\ge 1$, and the learning algorithm is not stable. Therefore, by Theorem 1, it does not generalize uniformly. Note that we used the information-theoretic interpretation of uniform generalization. 

The second approach is to use the statistical interpretation of uniform generalization. Let $H\in\{H_{S_m}^0,\;H_{S_m}^1\}$ be the hypothesis inferred by the learning algorithm above, and consider the following \emph{different} parametric loss: 
\begin{equation*}
L(Z; H_{S_m}^k) = \mathbb{I}\big\{(-1)^{k+1} H_{S_m}^k(Z)\neq h^\star(Z)\big\}
\end{equation*}
In other words, we flip the predictions of $H^k_{S_m}$ if $k=0$ and measure the misclassification loss afterwards. Note that this is a parametric loss; it has a bounded range and satisfies the Markov chain $S_m\to H\to L(\cdot; H)$. However, the expected generalization risk w.r.t. this parametric loss is $R_{gen}(\LL) = \frac{1}{2}$ for all $m\ge 1$ because $R_{emp}(\LL)=0$ w.r.t. to this loss. Therefore, $\LL$ does not generalize uniformly in expectation\footnote{We remark here that the Markov inequality cannot be used to provide a concentration bound for the learning algorithm in Proposition 1 even if the expected generalization risk goes to zero because the quantity $R_{emp}(H; S_m)-R_{true}(H)$ is not guaranteed to be non-negative. Indeed, this is precisely why the learning algorithm $\LL$ constructed here generalizes in expectation but not in probability.}. 

\section{Proof of Theorem 2}
We will first prove the inequality when $k=2$. First, we write by definition: 
\begin{equation*}
\info{Z}{(H_1, H_2)} = \Dis{\PP(Z, H_1, H_2)}{\PP(Z)\, \PP(H_1, H_2)}
\end{equation*}
Using the fact that the total variation distance is related to the $\ell_1$ distance by $\Dis{P}{Q} = \frac{1}{2} ||P-Q||_1$, we have: 
\begin{align*}
&\info{Z}{(H_1, H_2)} = \frac{1}{2} \big|\big|\;\PP(Z, H_1, H_2)-\PP(Z)\, \PP(H_1, H_2) \;\big|\big|_1\\
&=\frac{1}{2} \big|\big|\; \PP(Z, H_1)\,\PP(H_2|Z,H_1)-\PP(Z)\,\PP(H_1)\, \PP(H_2|H_1)\;\big|\big|_1\\
&=\frac{1}{2} \big|\big|\; \big[\PP(Z, H_1)-\PP(Z)\,\PP(H_1)\big]\cdot \PP(H_2|H_1) \\ &\quad\quad\quad\quad+\PP(Z,H_1)\,\cdot\big[\PP(H_2|Z,H_1) -\PP(H_2|H_1)\big]\;\big|\big|_1
\end{align*}
Using the triangle inequality: 
\begin{align*}
&\info{Z_{trn}}{(H_1, H_2)}\\ &\quad\le \frac{1}{2}\Big|\Big|\big[\PP(Z, H_1)-\PP(Z)\,\PP(H_1)\big]\cdot \PP(H_2|H_1)\Big|\Big|_1 \\ 
&\quad\quad\quad+ \frac{1}{2}\;\Big|\Big|\PP(Z,H_1)\cdot\big[\PP(H_2|Z,H_1) -\PP(H_2|H_1)\big]\Big|\Big|_1
\end{align*}
The above inequality is interpreted by expanding the $\ell_1$ distance into a sum of absolute values of terms in the product space $\ZZ\times\HH_1\times\HH_2$, where $H_k\in\HH_k$. Next, we bound each term on the right-hand side separately. For the first term, we note that: 
\begin{align}\label{chain_rule_1}
\nonumber \frac{1}{2}& \big|\big|\; \big[\PP(Z, H_1)-\PP(Z)\,\PP(H_1)\big]\cdot \PP(H_2|H_1)\;\big|\big|_1\\ &\quad= \frac{1}{2} \big|\big|\; \PP(Z, H_1)-\PP(Z)\,\PP(H_1)\;\big|\big|_1= \info{Z}{H_1}
\end{align}
The equality holds by expanding the $\ell_1$ distance and using the fact that $\sum_{H_2} \PP(H_2|H_1) = 1$. 

However, the second term can be re-written as: 
\begin{align}\label{chain_rule_2}
\nonumber\frac{1}{2}& \big|\big|\; \PP(Z,H_1)\,\cdot\big[\PP(H_2|Z,H_1) -\PP(H_2|H_1)\big]\;\big|\big|_1\\
\nonumber&=\frac{1}{2}  \big|\big|\;\PP(H_1)\cdot\big[\PP(H_2,Z|H_1) -\PP(Z|H_1)\,\PP(H_2|H_1)\big]\;\big|\big|_1\\
\nonumber&=\EE_{H_1}\big[\Dis{\PP(H_2,Z|H_1)}{\PP(Z|H_1)\,\PP(H_2|H_1)}\big] \\ 
&=\infoc{Z}{H_2}{H_1}
\end{align}
Combining Eq. (\ref{chain_rule_1}) and (\ref{chain_rule_2}) yields the inequality: 
\begin{equation}\label{chain_rule_k2}
\info{Z}{(H_1,H_2)} \le \info{Z}{H_1}+\infoc{Z}{H_2}{H_1}
\end{equation} 
Next, we use Eq. (\ref{chain_rule_k2}) to prove the general statement for all $k\ge 1$. By writing: 
\begin{align*}
\info{Z}{(H_1,\ldots, H_k)}&\le \infoc{Z}{H_k}{(H_1,\ldots, H_{k-1})}\\ &\quad\quad+ \info{Z}{(H_1,\ldots, H_{k-1})}
\end{align*}
Repeating the same inequality on the last term on the right-hand side yields the statement of the theorem. 

\section{Proof of Proposition 2}
We will use the following fact \citep{alabdulmohsin_nips_2015}: 
\begin{fact}[Information Cannot Hurt]\label{info_cant_hurt_fact} For any random variables $X,Y,Z$:
\begin{equation*}
\info{X}{Y} \le \info{X}{(Y,Z)}
\end{equation*}
\end{fact}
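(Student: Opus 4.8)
The plan is to reduce the claim to the elementary fact that the $\ell_1$ distance between two distributions on a product space cannot increase when one coordinate is marginalized out. First I would unfold the definitions: using $\info{X}{Y} = \Dis{\PP(X)\PP(Y)}{\PP(X,Y)}$ together with the identity $\Dis{P}{Q} = \tfrac12\big\|P-Q\big\|_1$, we have
\begin{align*}
\info{X}{Y} &= \tfrac{1}{2}\,\big\|\,\PP(X,Y) - \PP(X)\,\PP(Y)\,\big\|_1,\\
\info{X}{(Y,Z)} &= \tfrac{1}{2}\,\big\|\,\PP(X,Y,Z) - \PP(X)\,\PP(Y,Z)\,\big\|_1,
\end{align*}
the first $\ell_1$ norm being over the values of $(X,Y)$ and the second over the values of $(X,Y,Z)$ (the argument is identical with integrals in the non-discrete case).

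Second, I would observe that summing each term of the second expression over all values of $Z$ recovers the corresponding term of the first: $\sum_z \PP(x,y,z) = \PP(x,y)$, and $\sum_z \PP(x)\,\PP(y,z) = \PP(x)\sum_z \PP(y,z) = \PP(x)\,\PP(y)$. Hence the pair of distributions compared in $\info{X}{Y}$ is exactly the $Z$-marginal of the pair compared in $\info{X}{(Y,Z)}$.

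Third, I would apply the triangle inequality inside the sum over $(x,y)$. Writing $\mu(x,y,z) = \PP(x,y,z) - \PP(x)\,\PP(y,z)$,
\begin{equation*}
\sum_{x,y}\Big|\,\sum_z \mu(x,y,z)\,\Big| \;\le\; \sum_{x,y,z}\big|\,\mu(x,y,z)\,\big|,
\end{equation*}
whose left-hand side equals $2\,\info{X}{Y}$ by the second step and whose right-hand side equals $2\,\info{X}{(Y,Z)}$. Dividing by $2$ yields $\info{X}{Y}\le\info{X}{(Y,Z)}$, which is the claim.

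There is essentially no obstacle here; the only points that warrant care are (i) verifying that the $Z$-marginal of the product distribution $\PP(X)\,\PP(Y,Z)$ is the product distribution $\PP(X)\,\PP(Y)$ — so that both objects being compared genuinely arise by marginalizing the pair appearing in $\info{X}{(Y,Z)}$ — and (ii) phrasing the marginalization/contraction step so that it also covers non-discrete $X,Y,Z$. One could alternatively invoke the data processing inequality for total variation, treating marginalization over $Z$ as a deterministic channel acting on both $\PP(X,Y,Z)$ and $\PP(X)\PP(Y,Z)$, but the one-line $\ell_1$ triangle inequality above is self-contained and avoids citing additional machinery.
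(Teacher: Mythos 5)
Your proof is correct. Note that the paper does not actually prove this fact—it is imported from Alabdulmohsin (2015) and used as a black box in the proof of Proposition 2—so there is no in-paper argument to compare against line by line. Your route (unfold $\info{X}{Y}$ and $\info{X}{(Y,Z)}$ as halved $\ell_1$ norms, check that summing out $Z$ sends both $\PP(X,Y,Z)$ and $\PP(X)\,\PP(Y,Z)$ to $\PP(X,Y)$ and $\PP(X)\,\PP(Y)$ respectively, then apply the triangle inequality to the inner sum over $z$) is the standard contraction-under-marginalization argument, and you correctly flag the one point that genuinely needs checking, namely that the $Z$-marginal of the product $\PP(X)\,\PP(Y,Z)$ is again the product $\PP(X)\,\PP(Y)$. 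It is also consistent in spirit with the techniques the paper uses elsewhere: the proof of the chain rule (Theorem 2) likewise expands the total variation distance as an $\ell_1$ norm and collapses a coordinate using $\sum_{H_2}\PP(H_2|H_1)=1$, which is your marginalization step in a slightly different guise, so nothing in your argument relies on machinery the paper would not accept.
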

Now, by the triangle inequality: 
\begin{align*}
\infoc{A}{C}{B} &= \EE_B \Dis{\PP(A|B)\cdot \PP(C|B)}{\PP(A,C|B)}\\
&=\EE_{A,B} \Dis{\PP(C|B)}{\PP(C|A,B)}\\
&\le \EE_{A,B} \Dis{\PP(C|B)}{\PP(C)} \\ &\quad\quad+ \EE_{A,B} \Dis{\PP(C)}{\PP(C|A,B)}\\
&= \EE_B  \Dis{\PP(C|B)}{\PP(C)} \\ &\quad\quad + \EE_{A,B} \Dis{\PP(C)}{\PP(C|A,B)}\\
&= \info{B}{C} + \info{C}{(A,B)}
\end{align*}
Therefore: 
\begin{equation*}
\info{C}{(A,B)} \ge \infoc{A}{C}{B}  -  \info{B}{C}
\end{equation*} 
Combining this with the following chain rule of Theorem 2: 
\begin{equation*}
\info{C}{(A,B)} \le \infoc{A}{C}{B}  +  \info{B}{C}
\end{equation*} 
yields: 
\begin{equation*}
\Big|\info{C}{(A,B)} - \infoc{A}{C}{B}  \Big| \le \info{B}{C}
\end{equation*}
Or equivalently:
\begin{equation}\label{eq::prop2_eq1}
\Big|\info{A}{(B,C)} - \infoc{A}{C}{B}  \Big| \le \info{A}{B}
\end{equation}

To prove the other inequality, we use Fact \ref{info_cant_hurt_fact}. We have: 
\begin{equation*}
\info{A}{B} \le \info{A}{(B,C)} \le \info{A}{B} + \infoc{A}{C}{B},
\end{equation*}
where the first inequality follows from Fact \ref{info_cant_hurt_fact} and the second inequality follows from the chain rule. Thus, we obtain the desired bound: 
\begin{equation}\label{eq::prop2_eq2}
\Big| \info{A}{(B,C)} - \info{A}{B}\Big| \le  \infoc{A}{C}{B}
\end{equation}
Both Eq. \ref{eq::prop2_eq1} and Eq. \ref{eq::prop2_eq2} imply that the chain rule is tight. More precisely, the inequality can be made arbitrarily close to an equality when one of the two terms in the upper bound is chosen to be arbitrarily close to zero.

\section{Proof of Theorem 3}
We will use the following fact:
\begin{fact}\label{distance_loss_fact}
Let $f:\mathcal{X}\to [0,1]$ be a function with a bounded range in the interval $[0,1]$. Let $\PP_1(x)$ and $\PP_2(x)$ be two different probability measures defined on the same space $\mathcal{X}$. Then:
\begin{equation*}
\Big|\EE_{X\sim\PP_1(x)} f(X) - \EE_{X\sim\PP_2(x)} f(X) \Big|\le \Dis{\PP_1(x)}{\PP_2(x)}
\end{equation*}
\end{fact}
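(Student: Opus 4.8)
The plan is to reduce the two-sided estimate to a one-sided one and then exploit the symmetry of $\Dis{\PP_1(x)}{\PP_2(x)}$ under interchange of its arguments. First I would rewrite the difference of expectations as $\EE_{X\sim\PP_1(x)} f(X) - \EE_{X\sim\PP_2(x)} f(X) = \int f\, d\nu$, where $\nu = \PP_1 - \PP_2$ is a signed measure (a finite signed sum in the discrete case), and take a Hahn decomposition $\mathcal{X} = \mathcal{X}^+ \cup \mathcal{X}^-$ of $\nu$, so that $\nu$ is nonnegative on $\mathcal{X}^+$ and nonpositive on $\mathcal{X}^-$; in the discrete setting this is just $\mathcal{X}^+ = \{x : \PP_1(x) \ge \PP_2(x)\}$ and its complement.

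Next, using $0 \le f \le 1$, I would split $\int f\, d\nu = \int_{\mathcal{X}^+} f\, d\nu + \int_{\mathcal{X}^-} f\, d\nu$ and bound the first integral from above by replacing $f$ with $1$ (valid because $d\nu \ge 0$ on $\mathcal{X}^+$) and the second by replacing $f$ with $0$ (valid because $d\nu \le 0$ on $\mathcal{X}^-$), which yields $\EE_{X\sim\PP_1(x)} f(X) - \EE_{X\sim\PP_2(x)} f(X) \le \nu(\mathcal{X}^+)$. It then remains to recognize that $\nu(\mathcal{X}^+) = \Dis{\PP_1(x)}{\PP_2(x)}$: since $\PP_1$ and $\PP_2$ are probability measures, $\nu(\mathcal{X}) = 0$, so $\nu(\mathcal{X}^+) = -\nu(\mathcal{X}^-)$, and therefore $\nu(\mathcal{X}^+) = \frac{1}{2}\big(\nu(\mathcal{X}^+) - \nu(\mathcal{X}^-)\big) = \frac{1}{2}\|\PP_1 - \PP_2\|_1 = \Dis{\PP_1(x)}{\PP_2(x)}$, using the definition of the total variation distance recalled in the Terminology section. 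Applying the same argument with $\PP_1$ and $\PP_2$ exchanged gives the reverse inequality, and combining the two one-sided bounds produces the stated bound on $\big|\EE_{X\sim\PP_1(x)} f(X) - \EE_{X\sim\PP_2(x)} f(X)\big|$.

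There is no serious obstacle here, since the statement is a standard characterization of total variation; the only point that warrants care is the decomposition step when $\mathcal{X}$ is an arbitrary measurable space rather than finite or discrete — one should invoke the Hahn/Jordan decomposition of $\PP_1 - \PP_2$ rather than compare densities pointwise (densities need not exist). As a consistency check, the choice $f = \mathbb{I}\{\mathcal{X}^+\}$ turns every inequality above into an equality, so the constant $1$ cannot be improved; this also matches the role the Fact plays later, where it converts a total variation bound between $\PP(Z_{trn})$ and $\PP(Z_{trn}\,|\,H)$ into a bound on a difference of risks in the proof of Theorem~\ref{robustness_theorem}.
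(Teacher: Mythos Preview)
Your argument is correct and is exactly the standard proof of this well-known characterization of total variation. Note, however, that the paper does not actually supply a proof of this statement: it is recorded as a \emph{Fact} and invoked without justification in the proof of Theorem~\ref{robustness_theorem}, so there is no ``paper's own proof'' to compare against --- your write-up simply fills in what the paper takes for granted.
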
 

First, consider the following scenario. Suppose a learning algorithm $\LL$ generates a hypothesis $H\in\HH$ from some marginal distribution $\PP(h)$ \emph{independently} of the sample $S_m$. Afterward, a sample $S_m\in\mathcal{Z}^m$ is observed, which comprises of $m$ i.i.d. observations. Then, $\LL$ selects $K\in\mathcal{K}$ according to $\PP(k|H,S_m)$. 

In this scenario, we have: 
\begin{equation*}
\info{Z_{trn}}{(H, K)} = \infoc{Z_{trn}}{K}{H},
\end{equation*}
where the equality follows from the chain rule in Theorem 2, the statement of Proposition 2, and the fact that $\info{Z_{trn}}{H}=0$. The conditional variational information is written as: 
\begin{align*}
&\infoc{Z_{trn}}{K}{H} \\
&\quad\quad= \EE_H \Dis{\PP(Z_{trn})\cdot \PP(K|H)}{\PP(Z_{trn},K|H)},
\end{align*}
where we used the fact that $\PP(Z_{trn}|H)=\PP(Z_{trn})$. Next, by marginalization, the conditional distribution $\PP(K|H)$ is given by: 
\begin{align*}
 \PP(K|H) &= \EE_{Z_{trn}'|H} [\PP(K|Z_{trn}',H)] \\ 
&= \EE_{Z_{trn}'} [\PP(K|Z_{trn}',H)].
\end{align*}
where the expectation is taken with respect to the marginal distribution of observations $\PP(z)$. Similarly: 
\begin{align*}
 \PP(Z_{trn},K|H) &= \PP(Z_{trn}|H)\cdot \PP(K|Z_{trn},H)\\
&= \PP(Z_{trn})\cdot \PP(K|Z_{trn},H)
\end{align*}
Therefore: 
\begin{align*}
&\infoc{Z_{trn}}{K}{H} \\
&\quad\quad=\EE_H\EE_{Z_{trn}}\Dis{\EE_{Z_{trn}'}\PP(K|Z_{trn}',H)}{\PP(K|Z_{trn},H)}
\end{align*}
Next, for every value of $H$ that is generated independently of the sample $S_m$, the variational information between $Z_{trn}\sim S_m$ and $K\in\mathcal{K}$ can be bounded using Theorem 3 in \citep{alabdulmohsin_nips_2015}. This follows because $H$ is selected independently of the sample $S_m$, and, hence, the i.i.d. property of the observations $Z_i$ continue to hold. Therefore, we obtain: 
\begin{align}\label{robust_eq_1}
\nonumber\EE_H\EE_{Z_{trn}}&\Dis{\EE_{Z_{trn}'}\PP(K|Z_{trn}',H)}{\PP(K|Z_{trn},H)}\\ 
\nonumber&= \infoc{Z_{trn}}{K}{H}\\ &\le \sqrt{\frac{|\mathcal{K}|}{2m}}
\end{align} 
Because $\PP(K|Z_{trn},H)$ is arbitrary, the above bound holds for any distribution of observations $\PP(z)$, any distribution $\PP(h)$, and any family of conditional distributions $\PP(k|Z_{trn},H)$.  

Next, we return to the original setting where both $H\in\mathcal{H}$ and $K\in\mathcal{K}$ are chosen according to the sample $S_m$. We have: 
\begin{align}\label{robust_eq_2}
\nonumber&\infoc{Z_{trn}}{K}{H}\\
\nonumber&\quad= \EE_H \Dis{p(Z_{trn}|H)\cdot p(K|H)}{p(Z_{trn},K|H)}\\
\nonumber&\quad=  \EE_{H,Z_{trn}} \Dis{p(K|H)}{p(K|Z_{trn},H)}\\
\nonumber&\quad= \EE_{H,Z_{trn}} \Dis{\EE_{Z_{trn}'|H} [p(K|Z_{trn}',H)]}{p(K|Z_{trn},H)}\\
\nonumber&\quad\le\EE_{H,Z_{trn}} \Dis{\EE_{Z'_{trn}|H} [p(K|Z_{trn}',H)]}{\EE_{Z_{trn}'} [p(K|Z_{trn}',H)]} \\ 
&\quad\quad + \EE_{H,Z_{trn}}  \Big|\Big| \EE_{Z'_{trn}} [p(K|Z'_{trn},H)] - p(K|Z_{trn},H)\Big|\Big|_1
\end{align}

We bound the second term in Eq. \ref{robust_eq_2} using our earlier result in Eq. \ref{robust_eq_1}.  

Next, we would like to bound the first term. Using the fact that the total variation distance is related to the $\ell_1$ distance by $\Dis{P}{Q} = \frac{1}{2}||P-Q||_1$, we have: 
\begin{align}\label{robust_eq_3}
\nonumber&\EE_{H,Z_{trn}} \Dis{\EE_{Z'_{trn}|H} [p(K|Z_{trn}',H)]}{\EE_{Z_{trn}'} [p(K|Z_{trn}',H)]} \\
\nonumber&=\EE_{H}\; \Dis{\EE_{Z'_{trn}|H} [p(K|Z_{trn}',H)]}{\EE_{Z_{trn}'} [p(K|Z_{trn}',H)]} \\
\nonumber&= \frac{1}{2}\;\EE_{H}\sum_{K\in\mathcal{K}} \Big|\EE_{Z'_{trn}|H} [p(K|Z_{trn}',H)] - \EE_{Z'_{trn}} [p(K|Z_{trn}',H)] \Big|\\
\nonumber&\le \frac{1}{2}\;\EE_{H}\sum_{K\in\mathcal{K}}  \Dis{\PP(Z_{trn}'|H)}{\PP(Z_{trn}'}\\
\nonumber&= \frac{1}{2}\;\sum_{K\in\mathcal{K}} \EE_H \Dis{\PP(Z_{trn}'|H)}{\PP(Z_{trn}'} \\ 
&= \frac{|\mathcal{K}|}{2}\; \info{Z_{trn}}{H}
\end{align}
Here, the inequality follows from Fact \ref{distance_loss_fact}.

Combining all results in Eq. \ref{robust_eq_1}, \ref{robust_eq_2}, and \ref{robust_eq_3}: 
\begin{equation}
\infoc{Z_{trn}}{K}{H} \le \frac{|\mathcal{K}|}{2}\,\info{Z_{trn}}{H} + \sqrt{\frac{|\mathcal{K}|}{2m}}
\end{equation}
This along with the chain rule imply the statement of the theorem. 

\section{Proof of Theorem 4} 
Let $L(\cdot; H)$ be a parametric loss function and write: 
\begin{equation}\label{kappa_1}
\kappa(t) = \PP\Big\{\big|R_{emp}(H; S_m) - R_{true}(H)\big| \ge t\Big\}
\end{equation}
Consider the new pair of hypotheses $(H, K)$, where: 
\begin{equation*}
K = \begin{cases} +1, & \text{ if } R_{emp}(H; S_m) \ge R_{true}(H) + t\\ 
-1, &\text{ if } R_{emp}(H; S_m) \le R_{true}(H) - t \\ 
0, & \text{otherwise}\end{cases} 
\end{equation*} 
Then, by Theorem 3, the uniform generalization risk in expectation for the composition of hypotheses $(H, K)$ is bounded by $4(5/2)\,\info{Z_{trn}}{H} + \sqrt{\frac{\log 3}{2m}}$. This holds uniformly across all parametric loss functions $L'(\cdot; H,K)\to [0,1]$ that satisfy the Markov chain $S_m\to (H, K)\to L'(\cdot; H,K)$. Next, consider the parametric loss: 
\begin{equation*}
L'(Z; H,K) = \begin{cases}L(Z; H) & \text{ if } K=+1\\ 1-L(Z; H) & \text{ if } K=-1\\ 0 &\text{ otherwise} \end{cases} 
\end{equation*}
Note that $L'(\cdot; H,K)$ is parametric with respect to the composition of hypotheses $(H, K)$. Using Eq. \ref{kappa_1}, the generalization risk w.r.t $L'(\cdot; H,K)$ in expectation is, at least, as large as $t\,\kappa(t)$. Therefore, by Theorem 1 and Theorem 3, we have $t\,\kappa(t) \le (5/2)\,\info{Z_{trn}}{H} + \sqrt{\frac{\log 3}{2m}}$. Because $\info{Z_{trn}}{H}\le 1-\mathbb{S}(\LL)$ by definition, the statement of the theorem immediately follows.

\section{Proof of Proposition 3} 
Let $I(X; Y)$ denote the mutual information between $X$ and $Y$ and let $\textbf{H}(X)$ denote the Shannon entropy of the random variable $X$ measured in nats (i.e. using natural logarithms). We write $S_m = (Z_1,\ldots, Z_m)$. We have: 
\begin{align*}
I(S_m;& (H,K)) = \textbf{H}(S_m) - \textbf{H}(S_m\;|\;H,K)\\
&= \sum_{i=1}^m\textbf{H}(Z_i) - \sum_{i=1}^m \textbf{H}(Z_i| H,K, Z_1,\ldots, Z_{i-1})\\ 
&\ge \sum_{i=1}^m \textbf{H}(Z_i) - \textbf{H}(Z_i|H,K)\\ 
&= m I(Z_{trn}; H,K)
\end{align*}
The second line is the chain rule for entropy and the third lines follows from the fact that conditioning reduces entropy. We obtain: 
\begin{equation*}
I(Z_{trn}; H,K) \le \frac{I(S_m; (H,K))}{m}
\end{equation*}
By Pinsker's inequality: 
\begin{equation*}
\info{Z_{trn}}{(H,K)} \le \sqrt{\frac{I(Z_{trn}; (H,K))}{2}}\le  \sqrt{\frac{I(S_m; (H,K))}{2m}}
\end{equation*}
Using the chain rule for mutual information: 
\begin{align*}
\info{Z_{trn}}{(H,K)} &\le  \sqrt{\frac{I(S_m; (H,K))}{2m}} \\ 
&=\sqrt{\frac{I(S_m; H) + I(S_m; K|H)}{2m}} \\ 
&\le \sqrt{\frac{I(S_m; H) + \textbf{H}(K)}{2m}} \\ 
&\le \sqrt{\frac{I(S_m; H) + \log|\mathcal{K}|}{2m}} \\ 
\end{align*}
The desired bound follows by applying the same proof technique of Theorem 4 on the last uniform generalization bound.

\section{Proof of Corollary 1}
First, we note that for any two adjacent samples $S$ and $S'$, we have: 
\begin{equation*}
\PP(H|S) - \PP(H|S') \le (e^\epsilon -1)\;\PP(H|S') + \delta
\end{equation*} 
This follows by definition of differential privacy. Similarly, we have: 
\begin{align*}
\PP(H|S) - \PP(H|S') &\ge (e^{-\epsilon} -1)\;\PP(H|S') -e^{-\epsilon} \delta\\
&= -\Big[(1-e^{-\epsilon})\;\PP(H|S') + e^{-\epsilon}\delta\Big]\\
&\ge -e^\epsilon\;\Big[(1-e^{-\epsilon})\;\PP(H|S') +  e^{-\epsilon}\delta\Big]\\
&= -\Big[(e^\epsilon-1)\PP(H|S') + \delta\Big]
\end{align*} 
Both results imply that: 
\begin{equation}\label{diff_privacy_eq_bound}
\big|\PP(H|S) - \PP(H|S') \big| \le (e^\epsilon-1)\PP(H|S') + \delta
\end{equation}

Writing:
\begin{align*}
\info{Z_{trn}}{H} &= \Dis{\PP(Z_{trn}, H)}{\PP(Z_{trn})\cdot\PP(H)}\\
&=\EE_{Z_{trn}}\;\Dis{\PP(H|Z_{trn})}{\PP(H)}\\
&=\frac{1}{2}\EE_{Z_{trn}}\Big|\Big|\EE_{Z_{trn}'}\big[\PP(H|Z_{trn})-\PP(H|Z_{trn}')\big] \Big|\Big|_1\\
&\le \frac{1}{2}\EE_{Z_{trn},Z_{trn}'}\Big|\Big|\PP(H|Z_{trn})-\PP(H|Z_{trn}') \Big|\Big|_1
\end{align*}
The last inequality follows by convexity. Next, let $S_{m-1}$ be a sample that contains $m-1$ observations drawing i.i.d. from $\PP(z)$. Then: 
\begin{align*}
&\info{Z_{trn}}{H}\le  \frac{1}{2}\EE_{Z_{trn},Z_{trn}'}\Big|\Big|\PP(H|Z_{trn})-\PP(H|Z_{trn}') \Big|\Big|_1\\
&\;= \frac{1}{2}\EE_{Z_{trn},Z_{trn}'}\Big|\Big|\EE_{S_{m-1}}\big[\PP(H|Z_{trn}, S_{m-1})-\PP(H|Z_{trn}', S_{m-1})\big] \Big|\Big|_1\\
&\le\;\frac{1}{2} \EE_{S,S'} \Big|\Big|\PP(H|S)-\PP(H|S') \Big|\Big|_1,
\end{align*}
where $S,S'$ are two adjacent samples.

Next, we expand the $\ell_1$ distance and use Eq \ref{diff_privacy_eq_bound}: 
\begin{align*}
\info{Z_{trn}}{H} &\le \frac{1}{2} \EE_{S,S'} \Big|\Big|\PP(H|S)-\PP(H|S') \Big|\Big|_1 \\ 
&= \frac{1}{2} \EE_{S,S'} \sum_{H\in\mathcal{H}}\big|\PP(H|S)-\PP(H|S') \big|\\
&\le \frac{1}{2} \EE_{S,S'} \sum_{H\in\mathcal{H}}\big[(e^\epsilon-1) \PP(H|S') + \delta \big]\\
&= \frac{e^\epsilon-1+\delta}{2}
\end{align*}
Finally, the desired bound follows by combining the last inequality with Theorem 4.

\section{Proof of Corollary 2}
It has been shown in \cite{alabdulmohsin_nips_2015} that the supremum generalization risk is achieved (arbitrarily well) using the following binary-valued loss: 
\begin{equation}\label{max_gen_equaion}
L^\star(z; H) = \mathbb{I}\big\{\PP(Z_{trn}=z|H)\ge \PP(Z_{trn}=z) \big\} 
\end{equation}
Therefore, if an algorithm is $(\epsilon,\delta)$ robustly generalizing, let the adversary $\mathcal{A}$ (or equivalently the parametric loss $L(\cdot; H)$) be fixed to the one given by Eq. \ref{max_gen_equaion}. Hence, we have by definition of robust generalization: 
\begin{equation}
\PP\Big\{\big|\EE_{Z\sim \PP(z)} L^\star(Z; H) - \frac{1}{m} \sum_{Z_i\in S} L^\star(Z_i; H)\big| \le \epsilon\Big\}\ge 1-\delta,
\end{equation}
Therefore: 
\begin{equation*}
\Big|\mathbb{E}_{S, H}\big[ \EE_{Z\sim \PP(z)} L^\star(Z; H) - \frac{1}{m} \sum_{Z_i\in S} L^\star(Z_i; H)\big]\Big|\le \epsilon+\delta
\end{equation*}
Because $L^\star(\cdot; H)$ achieves the maximum possible generalization risk in expectation \citep{alabdulmohsin_nips_2015}, we have the uniform generalization bound $\info{Z_{trn}}{H} \le \epsilon+\delta$. Hence, ($\epsilon,\delta)$ robust generalization implies a uniform generalization at the rate $\epsilon+\delta$. 

The proof of the converse follows from our concentration bound in Theorem 4, which shows that uniform generalization in expectation implies a generalization in probability. In particular, any algorithm that generalizes uniformly with rate $\tau$ is ($\epsilon, \gamma$) robustly generalizing, with $\gamma = (5/2)(\tau + \sqrt{\log 9/(25m)})/\epsilon$.

\section{Proof of Theorem 5}
Before we prove the statement of the theorem, we begin with the following lemma:
\begin{lemma}\label{tight_lemma}
Let the observation space $\ZZ$ be the interval $[0,1]$, where $\PP(z)$ is continuous in $[0,1]$. Let $H\subseteq S_m: |H|=k$ be a set of $k$ examples picked at random without replacement from the sample $S_m$. Then $\info{Z_{trn}}{H}=\frac{k}{m}$. 
\end{lemma}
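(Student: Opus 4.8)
The plan is to compute $\info{Z_{trn}}{H}$ directly from its definition. Recall that for any two random variables $\info{X}{Y} = \Dis{\PP(X)\PP(Y)}{\PP(X,Y)} = \tfrac12\big\|\,\PP(X)\PP(Y)-\PP(X,Y)\,\big\|_1$; factoring $\PP(X,Y)=\PP(Y)\PP(X\,|\,Y)$ inside the $\ell_1$ norm and disintegrating over $Y$ gives the elementary identity $\info{X}{Y} = \EE_Y\big[\Dis{\PP(X)}{\PP(X\,|\,Y)}\big]$. So my first step is to reduce the claim to showing that $\Dis{\PP(Z_{trn})}{\PP(Z_{trn}\,|\,H=h)} = k/m$ for almost every realization $h$ of $H$, after which averaging over $H$ finishes the proof.

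The heart of the argument is then to identify the conditional law $\PP(Z_{trn}\,|\,H=h)$. Write $S_m=\{Z_1,\ldots,Z_m\}$ with the $Z_i$ i.i.d.\ from the continuous law $\PP(z)$ on $[0,1]$, let $\mathcal{I}\subseteq\{1,\ldots,m\}$, $|\mathcal{I}|=k$, be the uniformly random index set with $H=\{Z_i:i\in\mathcal{I}\}$, and let $J$ be an independent uniform index with $Z_{trn}=Z_J$. Because $\PP(z)$ has no atoms, the $Z_i$ are pairwise distinct almost surely, so $h$ consists of $k$ distinct reals. Conditioning on $\mathcal{I}$, the families $(Z_i)_{i\in\mathcal{I}}$ and $(Z_i)_{i\notin\mathcal{I}}$ are independent, so further conditioning on $\{Z_i:i\in\mathcal{I}\}=h$ leaves the remaining $m-k$ observations i.i.d.\ $\PP(z)$. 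Since $\PP\{J\in\mathcal{I}\}=k/m$, and $Z_{trn}$ is uniform on the $k$ points of $h$ given $J\in\mathcal{I}$ and a fresh $\PP(z)$-draw given $J\notin\mathcal{I}$, I get
\begin{equation*}
\PP(Z_{trn}\,|\,H=h)\;=\;\frac{k}{m}\,\mathcal{U}_h\;+\;\frac{m-k}{m}\,\PP(z),
\end{equation*}
where $\mathcal{U}_h$ is the uniform distribution on the $k$ points of $h$.

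From this mixture the rest is immediate: $\PP(Z_{trn}\,|\,H=h)-\PP(Z_{trn}) = \tfrac{k}{m}\big(\mathcal{U}_h-\PP(z)\big)$, and since $\mathcal{U}_h$ is purely atomic while $\PP(z)$ is nonatomic, the two measures are mutually singular and $\Dis{\mathcal{U}_h}{\PP(z)}=1$; hence $\Dis{\PP(Z_{trn})}{\PP(Z_{trn}\,|\,H=h)}=k/m$ for every admissible $h$, and taking the expectation over $H$ yields $\info{Z_{trn}}{H}=k/m$. As a sanity check, $k=m$ gives $1$, consistent with $H$ then encoding the whole sample, and $k=0$ gives $0$.

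The only genuinely delicate point will be the identification of $\PP(Z_{trn}\,|\,H=h)$: the event $\{H=h\}$ has probability zero, so this conditional law must be read as a regular conditional distribution, which exists on the standard Borel space $[0,1]$. Continuity of $\PP(z)$ is precisely what makes this transparent --- it forces the $Z_i$ to be distinct (so that $H$ genuinely selects $k$ of the sample points together with their values), it makes $\mathcal{U}_h$ and $\PP(z)$ singular, and it makes the disintegration along $\{H=h\}$ behave exactly like partitioning an i.i.d.\ sample into an in-subset and an out-of-subset block. Everything else is bookkeeping.
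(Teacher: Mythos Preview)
Your argument is correct. Both you and the paper identify the same mixture form $\PP(Z_{trn}\,|\,H=h)=\tfrac{k}{m}\,\mathcal{U}_h+\tfrac{m-k}{m}\,\PP(z)$, but you take a more direct route from there: you exploit the mutual singularity of the atomic $\mathcal{U}_h$ and the nonatomic $\PP(z)$ to compute the total variation distance \emph{exactly} as $k/m$ in one step. The paper instead proves the equality by a sandwich: the upper bound $\info{Z_{trn}}{H}\le k/m$ comes from convexity of total variation (Jensen applied to the mixture), and the matching lower bound comes from Theorem~\ref{main_theorem} via the loss $L(Z;H)=\mathbb{I}\{Z\in H\}$, which has generalization risk exactly $k/m$. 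Your approach is more self-contained (it does not invoke Theorem~\ref{main_theorem}) and makes explicit why continuity of $\PP(z)$ is needed; the paper's approach has the minor advantage of exhibiting a concrete loss that witnesses the bound, which is reused immediately afterward in the proof of Theorem~\ref{tight_theorem}.
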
 
\begin{proof}
First, we note that $\PP(Z_{trn}|H)$ is a \emph{mixture} of two distributions: one that is uniform in $H$ with probability $k/m$, and the original distribution $\PP(z)$ with probability $1-k/m$. By Jensen's inequality, we have $\info{Z_{trn}}{H}\le k/m$. Second, let the parametric loss be $L(\cdot; H)=\mathbb{I}\{Z\in H\}$. Then, $|R_{gen}(\LL)|=\frac{k}{m}$. By Theorem 1, we have $\info{Z_{trn}}{H}\ge |R_{gen}(\LL)|=k/m$. Both bounds imply the statement of the lemma. 
\end{proof}

Now, we prove Theorem 5. Consider the example where $\mathcal{Z}=[0,\,1]$ and suppose that the observations $Z\in\mathcal{Z}$ have a continuous marginal distribution. Because $t$ is a rational number, let the sample size $m$ be chosen such that $k=t\,m$ is an integer. 

Let $\{Z_1,\ldots, Z_m\}$ be the training set, and let the hypothesis $H$ be given by $H=\{Z_1,\ldots, Z_k\}$ with some probability $\delta>0$ and $H=\{\}$ otherwise. Here, the $k$ instances $Z_i\in H$ are picked uniformly at random without replacement from the sample $S_m$. To determine the variational information between $Z_{trn}$ and $H$, we consider the two cases:
\begin{enumerate} 
\item If $H\neq \{\}$, then $\Dis{\PP(Z_{trn})}{\PP(Z_{trn}|H)}=t$ as proved in Lemma 1. This happens with probability $\delta$ by construction. 
\item If $H=\{\}$ then $\PP(Z_{trn}|H)=\PP(Z_{trn})$. Hence, we have $\Dis{\PP(Z_{trn})}{\PP(Z_{trn}|H=\{\})}=0$. This happens with probability $1-\delta$. 
\end{enumerate} 

So, by combining the two cases above, we deduce that: 
\begin{equation*}
\info{Z_{trn}}{H} = \EE_H\; \Dis{\PP(Z_{trn})}{\PP(Z_{trn}\;|\;H)} = t\; \delta.
\end{equation*}
Therefore, $\LL$ generalizes uniformly with the rate $t\delta$. Next, let the loss $L(\cdot; H)$ be given by $L(Z\;;\;H) = \mathbb{I}\big\{Z\in H\big\}$. With this loss: 
\begin{align*}
\PP\Big\{\big|R_{emp}(H; S_m) - R_{true}(H)\big| = t \Big\} &= \delta\\
&=\frac{\info{Z_{trn}}{H}}{t},
\end{align*}
which is the statement of the theorem. 

\section{Proof of Proposition 4}
This proposition is proved in Corollary 1.

\bibliographystyle{apalike}
\bibliography{chain_learn}

\end{document}